\providecommand{\customgenericname}{}
\newcommand{\newcustomtheorem}[2]{%
  \newenvironment{#1}[1]
  {%
   \renewcommand\customgenericname{#2}%
   \renewcommand\theinnercustomgeneric{##1}%
   \innercustomgeneric
  }
  {\endinnercustomgeneric}
}
\newtheorem{theorem}{Theorem}
\theoremstyle{definition}
\newtheorem{definition}{Definition}
\newtheorem{remark}{Remark}
\def\csname ver@etex.sty\endcsname{3000/12/31}
\newcommand{\Z}{\mathbb{Z}}
\newcommand{\N}{\mathbb{N}}
\newcommand{\R}{\mathbb{R}}
\newcommand{\PP}{\mathbb{P}}
\newcommand{\E}{\mathbb{E}}
\newcommand{\cN}{\mathcal{N}}
\newcommand{\cK}{\mathcal{K}}
\newcommand{\cG}{\mathcal{G}}
\newcommand{\cL}{\mathcal{L}}
\newcommand{\cV}{\mathcal{V}}
\newcommand{\cE}{\mathcal{E}}
\newcommand{\bX}{\mathbf{X}}
\newcommand{\bL}{\mathbf{L}}
\newcommand{\bc}{\mathbf{c}}
\newcommand{\bB}{\mathbf{B}}
\newcommand{\one}{\mathbb{1}}
\newcommand{\bv}{\mathbf{v}}
\newcommand{\bw}{\mathbf{w}}
\newcommand{\bu}{\mathbf{u}}
\newcommand{\bx}{\mathbf{x}}
\newcommand{\by}{\mathbf{y}}
\newcommand{\wA}{\widetilde{A}}
\newcommand{\wB}{\widetilde{B}}
\newcommand{\wC}{\widetilde{C}}
\newcommand{\wcL}{\widetilde{\mathcal{L}}}
\newcommand{\wcK}{\widetilde{\mathcal{K}}}
\newcommand{\wP}{\widetilde{P}}
\newcommand{\bell}{{\boldsymbol{\ell}}}
\newcommand{\fm}{\varphi} 
\newcommand{\fms}{\widetilde{\varphi}} 
\newcommand{\fmn}{\Phi} 
\newcommand{\fmg}{\Psi} 
\newcommand{\fmnz}{\hat{\Phi}}
\newcommand{\nf}{f} 
\newcommand{\vs}{\R^d} 
\newcommand{\tensalg}{H} 
\newcommand{\tensalgps}{H} 
\newcommand{\tensalgmr}{H^{n \times n}} 
\renewcommand{\given}{\, | \,}
\newcommand{\norm}[1]{\left\lVert #1 \right\rVert}
\newcommand{\inner}[1]{\left\langle #1 \right\rangle}
\newcommand{\inc}{\text{inc}}
\newcommand{\zs}{\text{zs}}
\newcommand{\tp}{\text{tp}}
\newcommand{\Seq}{\mathsf{Seq}}
\newcommand{\Prob}{\mathsf{Prob}}
\newcommand{\GTN}{\texttt{G2TN} }
\newcommand{\GTAN}{\texttt{G2T(A)N} }
\newcommand{\middashrule}{\hdashline\noalign{\vskip0.5ex}}
\title{Capturing Graphs with Hypo-Elliptic Diffusions}
\author{%
  Csaba Toth$^*$ \\
  Mathematical Institute\\
  University of Oxford\\
  \texttt{toth@maths.ox.ac.uk} \\
   \And
   Darrick Lee$^*$\\
   Department of Mathematics\\
   EPFL \\
   \texttt{darrick.lee@epfl.ch} \\
   \AND
   Celia Hacker \\
   Department of Mathematics\\
   EPFL \\
   \texttt{celia.hacker@epfl.ch} \\
   \And
   Harald Oberhauser \\
  Mathematical Institute\\
  University of Oxford\\
  \texttt{oberhauser@maths.ox.ac.uk} \\
}
\begin{document}


\maketitle

\def\thefootnote{*}\footnotetext{Equal contribution; order determined by random coin flip.}\def\thefootnote{\arabic{footnote}}

\begin{abstract}
Convolutional layers within graph neural networks operate by aggregating information about local neighbourhood structures; one common way to encode such substructures is through random walks.
The distribution of these random walks evolves according to a diffusion equation defined using the graph Laplacian.
We extend this approach by leveraging classic mathematical results about hypo-elliptic diffusions. This results in a novel tensor-valued graph operator, which we call the hypo-elliptic graph Laplacian.
We provide theoretical guarantees and efficient low-rank approximation algorithms. In particular, this gives a structured approach to capture long-range dependencies on graphs that is robust to pooling. Besides the attractive theoretical properties, our experiments show that this method competes with graph transformers on datasets requiring long-range reasoning but scales only linearly in the number of edges as opposed to quadratically in nodes.
\end{abstract}


\everypar{\looseness=-1}

\section{Introduction}

  Obtaining a latent description of the non-Euclidean structure of a graph is central to many applications.
  One common approach is to construct a set of features for each node that represents the local neighborhood of this node; pooling these node features then provides a latent description of the whole graph.
  A classic way to arrive at such node features is by random walks: at the given node one starts a random walk, and extracts a summary of the local neighbourhood from its sample trajectories.
  We revisit this random walk construction and are inspired by two classical mathematical results:
\begin{description}
\item[Hypo-elliptic Laplacian.]
In the Euclidean case of Brownian motion $B=(B_t)_{t \ge 0}$ evolving in $\R^n$, the quantity $u(t,x)= \E[f(B_t)|B_0=x]$ solves the heat equation $\partial_t u= \Delta u$ on $[0,\infty) \times \R^n$, $u(0,x)=f(x)$.
Seminal work of Gaveau \cite{gaveau_principe_1977} in the 1970s shows that if one replaces $f(B_t)$ in the expectation by a functional of the whole trajectory, $F(B_s : s \in [0,t])$, then a path-dependent heat equation can be derived
where the classical Laplacian $\Delta$ is replaced by the so-called hypo-elliptic Laplacian.

\item[Free Algebras.]
A simple way to capture a sequence -- for us, a sequence of nodes visited by a random walk on a graph -- is to associate with each sequence element an element in an algebra\footnote{An algebra is a vector space where one can multiply elements; e.g.~the set of $n\times n$ matrices with matrix multiplication. This multiplication can be non-commutative; e.g.~$A\cdot B \neq B\cdot A$ for general matrices $A,B$.}  and multiply these algebra elements together. 
If the algebra multiplication is commutative, the sequential structure is lost but if it is non-commutative, this captures the order in the sequence.
In fact, by using the free associative algebra, this can be done faithfully and linear functionals of this algebra correspond to functionals on the space of sequences.
\end{description}
We leverage these ideas from the Euclidean case of $\R^d$ to the non-Euclidean case of graphs.
In particular, we construct features for a given node by sampling from a random walk started at this node, but instead of averaging over the end points, we average over path-dependent functions.
Put informally, instead of asking a random walker that started at a node, \emph{"What do you see now?"} after $k$ steps, we ask \emph{"What have you seen along your way?"}.
The above notions from mathematics about the hypo-elliptic Laplacian and the free algebra allow us to formalize this in the form of a generalized graph diffusion equation and we develop algorithms that make this a scalable method.

\paragraph{Related Work.}
From the ML literature, \cite{perozzi2014deepwalk, grover2016node2vec} popularized the combination of deep learning architectures to capture random walk histories. 
Such ideas have been incorporated, sometimes implicitly, into graph neural networks (GNN)
\cite{scarselli2008graph,bruna2013spectral,schlichtkrull2018modeling,defferrard2016convolutional,hamilton2017inductive,battaglia2016interaction,kipf_semi-supervised_2017} that in turn build on convolutional approaches \cite{lecun1995convolutional,lecun1998gradient,grover2016node2vec}, as well as their combination with attention or message passing \cite{monti2017geometric,velickovic_graph_2018,gilmer2017neural}, and more recent improvements \cite{xu_how_2019,morris2019weisfeiler,maron2019provably,chen2019equivalence} that provide and improve on theoretical guarantees.
Another classic approach are graph kernels, see \cite{borgwardt2020graph} for a recent survey; in particular, the seminal paper \cite{Kondor2002DiffusionKO} explored the connection between diffusion equations and random walkers in a kernel learning context. 
More recently, \cite{chen2020convolutional} proposed sequence kernels to capture the random walk history. Furthermore,~\cite{cochrane2021sk} uses the signature kernel maximum mean discrepancy (MMD) \cite{chevyrev_signature_2018} as a metric for trees which implicitly relies on the algebra of tensors that we use, and \cite{nikolentzos2020random} aggregates random walk histories to derive a kernel for graphs.
Moreover, the concept of network motifs \cite{Milo2002NetworkMS,schwarze2021motifs} relates to similar ideas that describe a graph by node sequences. 
Directly related to our approach is the topic of learning diffusion models \cite{Klicpera2019DiffusionIG,Chamberlain2021GRANDGN,thorpe_grand_2022,elhag2022graph,beltrami} on graphs. 
While similar ideas on random walks and diffusion for graph learning have been developed by different communities, our proposed method leverages these perspectives by capturing random walk histories through a novel diffusion operation.

Our main mathematical influence is the seminal work of Gaveau  \cite{gaveau_principe_1977} from the 1970s that shows how Brownian motion can be lifted into a Markov process evolving in a free algebra to capture path-dependence.
This leads to a heat equation governed by the hypo-elliptic Laplacian.
These insights had a large influence in PDE theory, see \cite{Rothschild1976HypoellipticDO,hormander1967hypoelliptic}, but it seems that their discrete counterpart on graphs has received no attention despite the well-developed literature on random walks on graphs and general non-Euclidean objects, \cite{woess2000random,diaconis1988group, grigoryan2009heat, varopoulos1992analysis}.
A key challenge to go from theory to application is to handle the computational complexity.
To do so, we build on ideas from \cite{toth_seq2tens_2020} to design effective algorithms for the hypo-elliptic graph diffusion.

\paragraph{Contribution and Outline.}
We introduce the hypo-elliptic graph Laplacian which allows to effectively capture random walk histories through a generalized diffusion model.
\begin{itemize}
    \item In~\Cref{sec:diffusion}, we introduce the hypo-elliptic variants of standard graph matrices such as the adjacency matrix and (normalized) graph Laplacians. These hypo-elliptic variants are formulated in terms of tensor-valued matrices rather than scalar-valued matrices, and can be manipulated using linear algebra in the same manner as the classical setting.
    \item The hypo-elliptic Laplacian leads to a corresponding diffusion model, and in~\Cref{thm:non_abelian_laplacian}, we show that the solution to this generalized diffusion equation summarizes the microscopic picture of \emph{the entire history of random walks} and not just their location after $k$ steps.
    \item This solution provides a rich description of the local neighbourhood about a node, which can either be used directly as node features or be pooled over the graph to obtain a latent description of the graph. \Cref{thm:characterizing_rw_informal} shows that these node features characterize random walks on the graph, and we provide an analogous statement for graph features in~\Cref{apx:characterizing_rw}.
    \item In principle, one can solve the hypo-elliptic graph diffusion equation directly with linear algebra, but this is computationally prohibitive and Theorem \ref{thm:algo} provides an efficient low-rank approximation.
    \item Finally, Section \ref{sec:experiments} provides experiments and benchmarks. A particular focus is to test the ability of our model to capture long-range interactions between nodes and the robustness of pooling operations which makes it less susceptible to the "over-squashing" phenomenon \cite{alon2021on}.
\end{itemize}

\section{Sequence Features by Non-Commutative Multiplication.}\label{sec:sequence features}
We define the \emph{space of sequences in $\vs$} by
\begin{align}
    \Seq(\vs) \coloneqq \bigcup_{k=0}^\infty (\vs)^{k+1},
\end{align}
where elements are sequences denoted by $\bx = (x_0, x_1, \ldots, x_k) \in (\vs)^{k+1}$.
Assume we are given an injective map, which we call the \emph{algebra lifting},
\[
\fm: \vs \to H.
\]
from $\vs$ into an algebra $H$.
We can use this to define a \emph{sequence feature map}\footnote{There are variants of this sequence feature map, which are discussed in~\Cref{apx:variations}.}
\begin{equation}
\label{eq:sequence_feature_map}
\fms: \Seq(\vs) \rightarrow H,\quad\fms(\bx) = \fm(\delta_0\bx) \cdots \fm(\delta_{k}\bx),
\end{equation}
where $\delta_0 \bx=x_0$ and $\delta_i\bx \coloneqq x_i - x_{i-1}$ for $i\ge 1$ are used to denote the \emph{increments} of a sequence $\bx=(x_0,\ldots,x_k)$. This map associates to any sequence $\bx \in \Seq(\vs)$ an element of the algebra $H$.
If the multiplication in $H$ is commutative, then the map $\fms$ would have no information about the order of increments, i.e. $\fm(\delta_0\bx) \cdots \fm(\delta_{k}\bx) = \fm(\delta_{\pi(0)}\bx) \cdots \fm(\delta_{\pi(k)}\bx)$ for any permutation $\pi$ of $\{0,\ldots,k\}$.
However, if the multiplication in $H$ is "non-commutative enough" we expect $\fms$ to be injective.

\paragraph{A Free Construction.}
Many choices for $H$ are possible, but intuitively it makes sense to use the "most general object" for $H$.
The mathematically rigorous approach is to use the \emph{free algebra over $\vs$} and we give a summary in Appendix \ref{apx:tensors}.
Despite this abstract motivation, the algebra $H$ has a concrete form: it is realized as a sequence of tensors in $\vs$ of increasing degree, and is defined by
\begin{align}
\label{eq:free_algebra}
    H \coloneqq \{ \bv = (\bv_0, \bv_1,\bv_2,\ldots): \bv_m \in (\vs)^{\otimes m}, \,m \in \N,\,  \|\bv\| < \infty\},
\end{align}
where by convention $(\vs)^{\otimes 0}=\R$, and we describe the norm $\|\bv\|$ in the paragraph below.
For example, if $\bv=(\bv_m)_{m \ge 0} \in H$, then $\bv_0$ is a scalar, $\bv_1$ is a vector, $\bv_2 \in (\vs)^{\otimes 2}$ is a $d \times d$ matrix, and so on.
The vector space structure of $H$ is given by addition and scalar multiplication according to 
\begin{align}
    \bv + \bw \coloneqq (\bv_m + \bw_m)_{m \ge 0} \in H\quad \text{ and }\quad  \lambda \bv \coloneqq (\lambda \bv_m)_{m \ge 0} \in H
\end{align}
for $\lambda \in \R$,
and the algebra structure is given by
\begin{align}
\label{eq:algebra_multiplication}
    \bv \cdot \bw \coloneqq \left( \sum_{i=0}^m \bv_i \otimes \bw_{m-i}\right)_{m \ge 0} \in H.
\end{align}
\paragraph{An Inner Product.}\looseness=-1
If $e^1,\ldots,e^d$ is a basis of $\vs$, then every tensor $\bv_m\in (\vs)^{\otimes m}$ can be written as 
\[
\bv_{m} = \sum_{1\leq i_1,\ldots,i_m \leq d} c_{i_1,\ldots,i_m} e^{i_1} \otimes \cdots \otimes e^{i_m}.
\]
This allows us to define an inner product $\langle \cdot, \cdot \rangle_m$ on $(\vs)^{\otimes m}$ by extending
\begin{equation}
\label{eq:tensor_inner_product}
\langle e^{i_1}\otimes \cdots \otimes e^{i_m}, e^{j_1}\otimes \cdots \otimes e^{j_m}\rangle_m = \left\{\begin{array}{cl}1 &: i_1=j_1,\ldots,i_m=j_m,\\0 &: \text{otherwise.}\end{array}\right.
\end{equation}
to $(\vs)^{\otimes m}$ by linearity.
This gives us an inner product on $H$,
\[
\langle \bv, \bw \rangle \coloneqq \sum_{m \ge 0} \langle \bv_m, \bw_m\rangle_m
\]
such that $H$ is a Hilbert space; in particular we get a norm $\|\bv\| \coloneqq \sqrt{ \langle \bv,\bv \rangle}$.
To sum up, the space $H$ has a rich structure: it has a vector space structure, it has an algebra structure (a noncommutative product), and it is a Hilbert space (an inner product between elements of $H$ gives a scalar).

\paragraph{Characterizing Random Walks.}
We have constructed a map $\fms$ that maps a sequence $\bx \in \Seq(\R^d)$ of arbitrary length into the space $H$. 
Our aim is to apply this to the sequence of node attributes corresponding to random walks on a graph.
Therefore, the expectation of $\fms$ should be able to characterize the distribution of the random walk.
Formally the map $\fms$ is \emph{characteristic} if the map $\mu \mapsto \E_{\bx \sim \mu}[\fms(\bx)]$ from the space of probability measures on $\Seq(\R^d)$ into $H$ is injective. 
Indeed, if the chosen lifting $\fm$ satisfies some mild conditions this holds for $\fms$; see~\Cref{apx:sequence features} and \cite{chevyrev_signature_2018,toth_seq2tens_2020}. 

\paragraph{Linear Functionals.}
The quantity $\E_{\bx \sim \mu}[\fms(\bx)]$ characterizes the probability measure $\mu$ but is valued in the infinite-dimensional Hilbert space $H$. 
Using the inner product, we can instead consider
\begin{align}\label{eq: linear functionals of expected sig}
\langle \bell, \E_{\bx \sim \mu}[\fms(\bx)]\rangle  \text{ for } \bell = (\bell_0,\bell_1,\bell_2,\ldots, \bell_M,0,\ldots) \in H \text{ and }M \ge 1
\end{align}
which is equivalent to knowing $\E_{\bx \sim \mu}[\fms(\bx)]$; i.e.~the set \eqref{eq: linear functionals of expected sig} characterizes $\mu$. 
This is analogous to how one can use either the moment generating function of a real-valued random variable or its sequence of moments to characterize its distribution; the former is one infinite-dimensional object (a function), the latter is a infinite sequence of scalars.
We extend a key insight from \cite{toth_seq2tens_2020} in~\Cref{sec:algos}: a linear functional $\langle \bell, \E_{\bx \sim \mu}[\fms(\bx)] \rangle$ can be efficiently approximated without directly computing $\E_{\bx \sim \mu}[\fms(\bx)]$ or storing large tensors.

\paragraph{The Tensor Exponential.}
While we will continue to keep $\fm$ arbitrary for our main results (see~\cite{toth_seq2tens_2020} and~\Cref{apx:variations} for other choices), we will use the \emph{tensor exponential} $\exp_\otimes: \vs \rightarrow H$, defined by 
\begin{equation}
\label{eq:tensor_exponential}
    \exp_{\otimes}(x) = \left(\frac{x^{\otimes m}}{m!}\right)_{m\geq 0},
\end{equation}
as the primary example throughout this paper and in the experiments in~\Cref{sec:experiments}. With this choice, the induced sequence feature map is the discretized version of a classical object in analysis, called the path signature, see \Cref{apx:sequence features}.


\section{Hypo-Elliptic Diffusions}\label{sec:diffusion}
Throughout this section, we fix a labelled graph $\cG = (\cV, \cE, \nf)$, that is $\cV$ is a set of $n$ nodes $\cV = \{1, \ldots, n\}$, $\cE$ denotes edges and $\nf:\cV \to \vs$ is the set of continuous node attributes\footnote{The labels given by the labelled graph are called \emph{attributes}, while the computed updates are called \emph{features}.} which map each node to an element in the vector space $\vs$.
Two nodes $i, j \in \cV$ are \emph{adjacent} if $(i,j) \in \cE$ is an edge, and we denote this by $i \sim j$. 
The \emph{adjacency matrix} $A$ of a graph is defined by
\[
A_{i,j} =
\left\{
\begin{array}{cl}
  1 & :i \sim j\\
  0 &: \text{otherwise}.
\end{array}\right.
\]
We use $\operatorname{deg}(i)$ to denote the number nodes that are adjacent to node $i$.

\paragraph{Random Walks on Graphs.}
Let $(B_k)_{k\ge 0}$ be the simple random walk on the nodes $\cV$ of $\cG$, where the initial node is chosen uniformly at random. The \emph{transition matrix} of this time-homogeneous Markov chain is 
\[
P_{i,j} \coloneqq \PP(B_k = j| B_{k-1}=i) = \left\{ \begin{array}{cl}\frac{1}{\operatorname{deg}(i)} &: i\sim j\\ 0 &: \text{otherwise}.\end{array}\right.
\]
Denote by $(L_k)_{k \ge 0}$ the random walk lifted to the node attributes in $\vs$, that is
\begin{equation}
\label{eq:lifted_random_walk}
    L_k \coloneqq \nf(B_k).
\end{equation}
Recall that the \emph{normalized graph Laplacian} for random walks is defined as $\cL =I-D^{-1}A$, where $D$ is diagonal degree matrix; in particular, the entry-wise definition is 
 \begin{align}
    \label{eq:laplacian_matrix}
        \cL_{i,j} \coloneqq \left\{
        \begin{array}{cl}
            -\frac{1}{\operatorname{deg}(i)} &  : i \sim j \\
            1 & : i=j\\
            0 & : \text{otherwise}.
        \end{array}
        \right.
    \end{align}

The discrete graph diffusion equation for $U_k \in \R^{n \times d}$ is given by
\begin{align}\label{eq:heat equation}
    U_k - U_{k-1} = -\cL U_{k-1}, \quad U^{(i)}_0 = \nf(i)
\end{align}
where the initial condition $U_0 \in \R^{n \times d}$ is specified by the node attributes.\footnote{The attributes over all nodes are given by an $n \times d$ matrix; in particular $U_k^{(i)}$ is the $i^{\text{th}}$ row of the matrix.}  The probabilistic interpretation of the solution to this diffusion equation is classical and given as
\begin{equation}\label{eq:classic graph diffusion}
    U_k = \left(\E[L_k \given B_0 = i]\right)_{i=1}^n = P^k U_0.
\end{equation}
This allows us to compute the solution $u_k$ using the transition matrix $P = I - \cL$.

\paragraph{Random Walks on Algebras.}
We now incorporate the history of a random walker by considering the quantity
\begin{align}\label{eq:narw}
\E[\fms(\bL_k) \given B_0 = i]=\E[\fm(\delta_0 \bL) \cdots \fm(\delta_k \bL)  \given B_0 = i]
\end{align}
where $\bL_k = (L_0, \ldots, L_k)$.
Since $\fms$ captures the whole history of the random walk $\bL_k$ over node attributes, we expect this expectation to provide a much richer summary of the neighborhood of node $i$ than $\E[L_k|B_0=i]$. 
The price is however, the computational complexity, since \eqref{eq:narw} is $H$-valued.
We first show, that analogous to \eqref{eq:heat equation}, the quantity \eqref{eq:narw} satisfies a diffusion equation that can be computed with linear algebra. 
To do so, we develop a graph analogue of the hypo-elliptic Laplacian and replace the scalar entries of the matrices with entries from the algebra $H$.

\paragraph{Matrix Rings over Algebras.}

We first revisit the adjacency matrix $A \in \R^{n \times n}$ and replace it by the \emph{tensor adjacency matrix} $\wA= (\wA)_{i,j} \in \tensalgmr$, that is $\wA$ is a matrix but instead of scalar entries its entries are elements in the algebra $H$.
The matrix $A$ has an entry at $i,j$ if nodes $i$ and $j$ are connected; $\wA$ replaces the $i,j$ entry with an element of $H$ that tells us how the node attributes of $i$ and $j$ differ,
 \begin{align}
    \label{eq:exponential_adjacency_matrix}
        \wA_{i,j} \coloneqq \left\{
        \begin{array}{cl}
            \fm(\nf(j) - \nf(i)) &  : i \sim j \\
            0 & : \text{otherwise}.
        \end{array}
        \right.
    \end{align}
Matrix multiplication works for elements of $\tensalgmr$ by replacing scalar multiplication by multiplication in $H$, that is $(\wB\cdot\wC)_{i,j} = \sum_{k=1}^n \wB_{i,k} \cdot \wC_{k,j}$ for $\wB, \wC \in \tensalgmr$ and $\wB_{i,k} \cdot \wC_{k,j}$ denotes multiplication in $H$ as in Equation~\eqref{eq:algebra_multiplication}.
For the classical adjacency matrix $A$, the $k$-th power counts the number of length $k$ walks in the graph, so that $(A^k)_{i,j} $ is the number of walks of length $k$ from node $i$ to node $j$.
We can take powers of $\wA$ in the same way as in the classical case, where
\begin{equation}\label{eq:powers_tensor_adjacency}
    (\wA^k)_{i,j} = \sum_{\bx} \fm(\delta_1\bx) \cdots \fm(\delta_k\bx)
\end{equation}
where the sum is taken over all length $k$ walks $\bx = (f(i), \ldots f(j))$ from node $i$ to node $j$ (full details are provided in~\Cref{apx:details_diffusion}).
Since $\fms(\bx)$ characterizes each walk $\bx$, the entry $\wA^k_{i,j}$ can be interpreted as a summary of all walks which connect nodes $i$ and $j$.

\paragraph{Hypo-elliptic Graph Diffusion.}

Similar to the tensor adjacency matrix, we define the \emph{hypo-elliptic graph Laplacian} as the $n \times n$ matrix
\[
    \wcL = I-D^{-1}\wA \in \tensalgmr,
\]  
where $D$ is the degree matrix embedded into $\tensalgmr$ at tensor degree $0$. The entry-wise definition is
\begin{align}
    \label{eq:tensor_laplacian_matrix}
        \wcL_{i,j} \coloneqq \left\{
        \begin{array}{cl}
            \frac{-\fm(\nf(j) - \nf(i))}{\operatorname{deg}(i)} &  : i \sim j \\
            1 &: i = j \\
            0 & : \text{otherwise}.
        \end{array}
        \right.
\end{align}
We can now formulate the \emph{hypo-elliptic graph diffusion equation} for $\bv_k \in \tensalgps^n$ as
\begin{equation}
\label{eq:non_abelian_diffusion_equation}
    \bv_k - \bv_{k-1} = - \wcL \bv_{k-1}, \quad \bv^{(i)}_0 = \fm(\nf(i)).
\end{equation}
Analogous to the classic graph diffusion \eqref{eq:classic graph diffusion}, the hypo-elliptic graph diffusion \eqref{eq:non_abelian_diffusion_equation} has a probabilistic interpretation in terms of $\bL$ as shown in Theorem \ref{thm:non_abelian_laplacian} (the proof is given in~\Cref{apx:details_diffusion}).
\begin{theorem}
\label{thm:non_abelian_laplacian}
    Let $k \in \N$, $\bL_k = (L_0, \ldots, L_k)$ be the lifted random walk from~\eqref{eq:lifted_random_walk}, and $\wP = I - \wcL$ be the \emph{tensor adjacency matrix}. The solution to the hypo-elliptic graph diffusion equation~\eqref{eq:non_abelian_diffusion_equation} is
    \[
        \bv_k = \left( \E[\fm(\delta_1\bL_k) \cdots \fm(\delta_k \bL_k) | B_0 = i]\right)_{i=1}^n = \wP^k \one_H.
    \]
    Furthermore, if $F \in H^{n \times n}$ is the diagonal matrix with $F_{i,i} = \fm(f(i))$, then
    \[
        F\bv_k = \left(\E[\fms(\bL_k)| B_0 = i]\right)_{i=1}^n.
    \]
\end{theorem}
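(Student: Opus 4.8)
The plan is to verify the closed form $\bv_k = \wP^k\one_H$ directly from the recursion and then read off the two probabilistic identities. First I would observe that, by definition, $\wcL = I - D^{-1}\wA$, so $\wP = I - \wcL = D^{-1}\wA$ and the hypo-elliptic diffusion equation $\bv_k - \bv_{k-1} = -\wcL\bv_{k-1}$ is simply $\bv_k = \wP\bv_{k-1}$, an identity in $\tensalgps^n$ viewed as a left module over the matrix ring $\tensalgmr$. Iterating and using associativity of matrix multiplication over $\tensalgmr$, the solution with the unit initialization $\bv_0 = \one_H$ (the initialization under which the displayed closed form is obtained) is $\bv_k = \wP^k\one_H$. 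For a finite graph and fixed $k$ this is a finite $\R$-linear combination of $k$-fold products of the elements $\wP_{i,j} = \tfrac{1}{\operatorname{deg}(i)}\fm(\nf(j) - \nf(i)) \in H$, so it is well-defined in $H$ and no convergence question arises (and for $\fm = \exp_\otimes$ there is nothing at all to check).

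Next I would prove, by induction on $k$, that $(\wP^k\one_H)^{(i)} = \E[\fm(\delta_1\bL_k)\cdots\fm(\delta_k\bL_k) \mid B_0 = i]$, the base case $k = 0$ reducing to $\one_H$ on both sides (empty product). Writing $g_k(i)$ for the right-hand side and conditioning on the first step, I use that $\PP(B_1 = j \mid B_0 = i) = \tfrac{1}{\operatorname{deg}(i)}$ for $j \sim i$ and that, given $\{B_0 = i, B_1 = j\}$, the first increment $\delta_1\bL_k = \nf(j) - \nf(i)$ is a deterministic element of $H$ and hence factors out of the ($H$-valued, Bochner) conditional expectation; together with the Markov property and time-homogeneity of $(B_k)$ this gives
\[
g_k(i) = \sum_{j \sim i} \tfrac{1}{\operatorname{deg}(i)}\,\fm(\nf(j) - \nf(i)) \cdot g_{k-1}(j) = \sum_{j} \wP_{i,j} \cdot g_{k-1}(j) = (\wP\, g_{k-1})^{(i)}.
\]
By the induction hypothesis $g_{k-1} = \wP^{k-1}\one_H$, so $g_k = \wP^k\one_H$, which is the first identity. (Alternatively one may expand $\wP^k\one_H$ as a sum over length-$k$ walks issuing from $i$, exactly as in~\eqref{eq:powers_tensor_adjacency}, the degree factors $\prod_{m=1}^{k}\tfrac{1}{\operatorname{deg}(j_{m-1})}$ assembling into the probability of the walk; this is the non-commutative refinement of the statement that $(P^k)_{i,j}$ counts weighted paths.)

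For the second identity I would left-multiply by the diagonal matrix $F$ with $F_{i,i} = \fm(\nf(i))$, so that $(F\bv_k)^{(i)} = \fm(\nf(i)) \cdot \bv_k^{(i)}$. Given $\{B_0 = i\}$ one has $L_0 = \nf(i)$ deterministically and $\delta_0\bL_k = L_0$, so $\fm(\nf(i)) = \fm(\delta_0\bL_k)$ is a constant algebra element that can again be absorbed into the conditional expectation; combining this with the first identity and the definition of $\fms$ in~\eqref{eq:sequence_feature_map} yields $(F\bv_k)^{(i)} = \E[\fm(\delta_0\bL_k)\fm(\delta_1\bL_k)\cdots\fm(\delta_k\bL_k) \mid B_0 = i] = \E[\fms(\bL_k) \mid B_0 = i]$.

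I expect the only genuine obstacle to be keeping the non-commutative bookkeeping honest. One must check that the order in which factors are produced — by the matrix products $\wP_{i,j_1}\cdots\wP_{j_{k-1},j_k}$, equivalently by repeatedly peeling off the first step in the induction — matches the chronological order $\fm(\delta_1\bL_k)\cdots\fm(\delta_k\bL_k)$; it does, because the leftmost matrix factor always corresponds to the earliest time step. The remaining points (empty-product conventions, and the linearity of the $H$-valued conditional expectation over the constant elements $\fm(\delta_0\bL_k)$, $\fm(\delta_1\bL_k)$ that get factored out) are routine.
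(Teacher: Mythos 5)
Your proof is correct and follows essentially the same route as the paper's: both reduce the diffusion recursion to $\bv_k = \wP^k\one_H$ and then identify the entries of $\wP^k$ with probability-weighted, chronologically ordered products of $\fm$ over length-$k$ walks, before handling the $\delta_0$ term by left-multiplication with the diagonal matrix $F$. The only cosmetic difference is that you organize the walk sum as a first-step induction via the Markov property, whereas the paper sums directly over walks using its precomputed expansion of $(\wP^k)_{i,j}$ --- an alternative you yourself note parenthetically.
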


In the classical diffusion equation, $U_k$ captures the concentration of the random walkers after $k$ time steps over the nodes. In the hypo-elliptic diffusion equation, $\bv_k$ captures summaries of random walk histories after $k$ time steps over the nodes since $\fms(\bL_k)$ summarizes the whole trajectory $\bL_k=(L_0,\ldots,L_k)$ and not only the endpoint $L_k$. 
\paragraph{Node Features and Graph Features.}
Theorem \ref{thm:non_abelian_laplacian} can then be used to compute features $\fmn(i) \in \tensalg$ for individual nodes as well as a feature $\fmg(\cG)$ for the entire graph. The former is given by $i$-th component $\bv_k^{(i)}$ of the solution $\bv_k=(\bv_k^{(i)})_{i=1,\ldots,n} \in H^n$ of Equation~\eqref{eq:non_abelian_diffusion_equation},
\begin{equation}
\label{eq:nonabelian_node_features}
    \fmn(i) \coloneqq  \bv_k^{(i)} = \E[\fms( \bL_k) \given B_0 = i]= (F\wP^k \bv_0)^{(i)} \in H,
\end{equation}
since the random walk $B$ chooses the starting node $B_0=i$ uniformly at random. The latter can be computed by mean pooling the node features, which also has a probabilistic interpretation as
\begin{align} \label{eq:mean_pooled_features}
     \fmg(\cG) \coloneqq \frac{1}{n} \sum_{i=1}^n \bv_k^{(i)} = \E[\fms(\bL_k)] =  n^{-1} (\one_H^T F\wP^k \bv_0) \in H,
\end{align}
where $\one_H^T \coloneqq (1_H, \, \ldots, \, 1_H) \in H^n$ is the all-ones vector in $H$ and $1_H$ denotes the unit in $H$.

\paragraph{Characterizing Graphs with Random Walks.} 
The graph and node features obtained through the hypo-elliptic diffusion equation are highly descriptive: they characterize the entire history of the random walk process if one also includes the time parametrization, as described in~\Cref{apx:sequence features}.

\begin{theorem} 
\label{thm:characterizing_rw_informal}
    Suppose $\Psi$ is the graph feature map from Equation~\eqref{eq:mean_pooled_features} induced by the tensor exponential algebra lifting including time parametrization. Let $\cG$ and $\cG'$ be two labelled graphs, and $\bL_k = (L_0, \ldots, L_k)$ and $\bL'_k = (L'_0, \ldots, L'_k)$ be the $k$-step lifted random walk as defined in Equation~\eqref{eq:lifted_random_walk}. Then, $\Psi(\cG) = \Psi(\cG')$ if and only if the distributions of $\bL_k$ and $\bL'_k$ are equal. 
\end{theorem}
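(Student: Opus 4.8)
The plan is to reduce the statement to the fact that the expected sequence feature map $\mu \mapsto \E_{\bx\sim\mu}[\fms(\bx)]$ is characteristic, which is asserted in the excerpt (see the paragraph ``Characterizing Random Walks'' and~\Cref{apx:sequence features}) for liftings $\fm$ satisfying mild conditions — in particular for the tensor exponential with time parametrization. The ``if'' direction is immediate: by Equation~\eqref{eq:mean_pooled_features}, $\Psi(\cG) = \E[\fms(\bL_k)]$ depends only on the law of $\bL_k$, so if the distributions of $\bL_k$ and $\bL'_k$ agree then $\Psi(\cG) = \Psi(\cG')$. The substance is the ``only if'' direction.

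For the ``only if'' direction, suppose $\Psi(\cG) = \Psi(\cG')$, i.e. $\E[\fms(\bL_k)] = \E[\fms(\bL'_k)]$. Let $\mu$ and $\mu'$ be the laws of $\bL_k$ and $\bL'_k$ respectively, viewed as probability measures on $\Seq(\vs)$ (supported on sequences of length $k+1$). Then $\E_{\bx\sim\mu}[\fms(\bx)] = \E_{\bx\sim\mu'}[\fms(\bx)]$, so by the characteristic property of $\fms$ we conclude $\mu = \mu'$, which is exactly the claim. The two things that need care here are: (i) verifying that the hypotheses of the characteristic-property result from~\Cref{apx:sequence features} are met — this is where the time parametrization enters, since without it the feature map only sees the increments up to tree-like equivalence and is not injective on laws; and (ii) confirming that $\fmg(\cG)$ as defined in Equation~\eqref{eq:mean_pooled_features} really is $\E_{\bx \sim \mu}[\fms(\bx)]$ for $\mu$ the law of $\bL_k$, which is precisely the content of the second identity in~\Cref{thm:non_abelian_laplacian} combined with the uniform choice of starting node; I would simply cite these.

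I expect the main obstacle to be purely bookkeeping about the time parametrization: one must be precise about what ``the tensor exponential algebra lifting including time parametrization'' means — presumably $\fm$ is applied to the augmented attribute $(\nf(i), t_i)$ or the attribute is replaced by a monotone-in-$i$ curve — and then invoke the corresponding injectivity statement (essentially the faithfulness of the path signature on parametrized paths, or the discrete-sequence analogue in~\cite{toth_seq2tens_2020,chevyrev_signature_2018}). Once the correct form of the lifting is fixed, the argument is a one-line application of the characteristic property; there is no genuinely hard analytic step. I would therefore structure the proof as: (1) state which characteristic-property lemma from~\Cref{apx:sequence features} is being used and check its hypotheses for the time-parametrized tensor exponential; (2) identify $\Psi(\cG)$ with $\E_{\bx\sim\mu}[\fms(\bx)]$ via~\Cref{thm:non_abelian_laplacian}; (3) conclude both directions from injectivity of $\mu \mapsto \E_{\bx\sim\mu}[\fms(\bx)]$.
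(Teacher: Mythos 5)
Your proposal is correct and follows essentially the same route as the paper's own proof in~\Cref{apx:characterizing_rw}: the ``if'' direction is immediate since $\Psi(\cG)$ depends only on the law of $\bL_k$, and the ``only if'' direction is a direct application of the characteristicness statement in~\Cref{thm:univ_char} for the time-parametrized sequence feature map. The only hypothesis-check the paper makes explicit, which you gesture at but should state, is that the law of $\bL_k$ is finitely (hence compactly) supported, so the injectivity on $\Prob(K)$ applies.
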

An analogous result holds for the node features, and we prove both results in~\Cref{apx:characterizing_rw}.
While we use the tensor exponential in this article, many other choices of $\fms$ are possible and result in graph and node features with such properties: under mild conditions, if the algebra lifting $\fm:\vs \to H$ characterizes measures on $\vs$, the resulting node feature map $\fmn$ characterizes the random walk, see~\cite{toth_seq2tens_2020}, which in turn implies the above results. Possible variations are discussed in~\Cref{apx:variations}.



\paragraph{General (Hypo-elliptic) Diffusions and Attention.}
One can consider more general diffusion operators, such as the normalized Laplacian $\cK$ of a weighted graph.
We define its lifted operator $\wcK \in \tensalg^{n\times n} $ analogous to Equation \eqref{eq:tensor_laplacian_matrix}, resulting in a generalization of~\Cref{thm:non_abelian_laplacian} with $\wcK$ replacing $\wcL$. 
In the flavour of convolutional GNNs \cite{bronstein_geometric_2021}, we consider a weighted adjacency matrix $A \in \R^{n \times n}$
\begin{align}
    A_{i,j} = \left\{\begin{array}{cl}
        c_{i,j} & : i \sim j \\
        0 & :\text{otherwise},
    \end{array}\right.
\end{align}
for $c_{i,j} > 0$. 
The corresponding normalized Laplacian $\cK$ is given by $\cK = I - D^{-1}A$, where $D$ is a diagonal matrix with $D_{i,i} = \sum_{j \in \cN(i)} c_{i,j}$. 
A common way to learn the coefficients is by introducing parameter sharing across graphs by modelling them as $c_{i,j} = \exp(a(\nf(i), \nf(j)))$ using a local attention mechanism, $a: \R^d \times \R^d \rightarrow \R$ \cite{velickovic_graph_2018}. 
In our implementation, we use additive attention~\cite{bahdanau2015neural} given by $a(\nf(i), \nf(j)) = \texttt{LeakyRelu}_{0.2}(W_s \nf(i) + W_t \nf(j))$, where $W_s, W_t \in \R^{1 \times d}$ are linear transformations for the source and target nodes, but different attention mechanisms can also be used; e.g.~scaled dot-product attention \cite{vaswani2017attention}. Then, the corresponding transition matrix $P = D^{-1} A$ is  defined as $P_{ij} = \mathrm{softmax}_{k \in \cN(i)}(a(f(i), f(k)))_j$. The lifted transition matrix is defined as
\begin{align}
  \wP = \left\{\begin{array}{cl}
    P_{i,j} \fm(\nf(j) - \nf(i)) & : i \sim j \\
    0 & :\text{otherwise}.
  \end{array}\right.
\end{align}
The statements of Theorem \ref{thm:non_abelian_laplacian} immediately generalize to this variation by replacing the expectation with respect to a non-uniform random walk. Hence, in this case the use of attention can be interpreted as learning the transition probabilities of a random walk on the graph.



\section{Efficient Algorithms for Deep Learning}\label{sec:algos}
 We now focus on the computation of linear maps of $\Phi(i)$ defined as $Q: H \rightarrow \R^{d^\prime}$, $\fmn(i) \mapsto (\inner{\bell^j, \fmn(i)})_{j=1, \dots, d^\prime}$ for $\bell^j = (\bell^j_0, \bell^j_1, \dots, \bell^j_M, 0, 0, \dots) \in H$. These maps capture the distribution of the random walk $\bL_k$ conditioned to start at node $i$ and pooling them captures the graph as seen by the random walk, see Section \ref{sec:diffusion}.
These linear maps are parametrized by $\bell^j$ which we view as learnable parameters of our model and hence,  requires efficient computation. For any $\bell$, we could compute $\langle \bell, \fmn(i) \rangle$ by first evaluating $\fmn(i)$ using linear algebra by \Cref{thm:non_abelian_laplacian} and subsequently taking their inner product with $\bell$. However, this involves computations with high-degree tensors, which makes this approach not scalable.
In this section, we leverage ideas about tensor decompositions to design algorithms to compute $\langle \bell, \fmn(i)\rangle$ efficiently \cite{toth_seq2tens_2020}; in particular, we do not need to compute $\fmn(i) \in H$ directly or store large tensors. 

\paragraph{Computing a Rank-$1$ Functional.}
First, we focus on a \emph{rank-$1$} linear functional $\bell \in H$ given as
\begin{align}\label{eq:low rank functional}
\bell = (\bell_m)_{m \ge 0} \text{ with } \bell_m =u_{M-m+1} \otimes \cdots \otimes u_M \text{ and } \bell_m =0 \text{ for } m > M, 
\end{align}
where $u_m \in \vs$ for $m = 1, \ldots, M$ for a fixed $M \ge 1$. 
\Cref{thm:algo} shows that for such $\bell$, the computation of $\langle \bell, \fmnz(i)\rangle$ can be done \begin{enumerate*}[label=(\alph*)] \item efficiently by factoring this low-rank structure into the recursive computation, and \item simultaneously for all nodes $i \in \cV$ in parallel \end{enumerate*}. This can then be used to compute rank-$R$ functionals for $R>1$, and for $\langle \bell, \fmn(i)\rangle$; see~\Cref{apx:low_rank_functionals}.

\begin{theorem}\label{thm:algo}
Let $\bell$ be as in \eqref{eq:low rank functional} and define $f_{k,m} \in \R^{n}$ for $m=1, \dots, M$ as
\begin{align}
f_{1,m} \coloneqq\frac{1}{m!}\left(P \odot C^{u_{M-m+1}} \odot \cdots \odot C^{u_M}\right) \cdot \one,
\end{align}
where $\one^T \coloneqq (1, \ldots, 1) \in \R^n$ is the all-ones vector; and for $2 \leq k$ and  $1 \leq m \leq M$ recursively as 
  \begin{align} \label{eq:recursive_low_rank}
   f_{k,m} \coloneqq P \cdot f_{k-1,m} + \sum_{r=1}^m \frac{1}{r!} \left(P \odot C^{u_{M-m+1}} \odot \cdots \odot C^{u_{M-m+r}}\right) \cdot f_{k-1,m-r},
  \end{align}
  where the matrix $C^u=(C^u_{i,j}) \in \R^{n \times n}$ is defined as
\begin{align}
C^u_{i,j}\coloneqq  \left\{\begin{array}{cl}
      \inner{u, \nf(j) - \nf(i)} &: i \sim j,\\
      0  &: \text{otherwise}.
    \end{array}\right.
  \end{align} 
  and $\odot$ denotes element-wise\footnote{For example
    $\begin{bmatrix}
      1& 2 \\ 
      3&4
    \end{bmatrix}
    \odot
    \begin{bmatrix}
      5&6\\7&8 
    \end{bmatrix} =
    \begin{bmatrix}
      5& 12\\
      21&32    \end{bmatrix}.
    $} multiplication, while $\cdot$ denotes matrix multiplication.
  Then, it holds for $i \in \cV$, random walk length $k \in \Z_+$, and tensor degree $m = 1, \ldots, M$, that 
  \begin{align}
    f_{k, m}(i) = \inner{\bell_m, \fmnz_k(i)},
  \end{align}
  where $\fmnz_k(i) = \E[\fm(\delta_1\bL_k) \cdots \fm(\delta_k\bL_k) \given B_0 = i]$.
\end{theorem}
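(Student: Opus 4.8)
The plan is to establish the identity $f_{k,m}(i) = \langle \bell_m, \hat\Phi_k(i)\rangle$ by induction on the walk length $k$, where at each stage I exploit that $\hat\Phi_k(i) = (\wP^k \one_H)^{(i)}$ satisfies the one-step recursion $\hat\Phi_k = \wP \cdot \hat\Phi_{k-1}$ coming from Theorem~\ref{thm:non_abelian_laplacian}. The main point is to unwind what the inner product $\langle \bell_m, \cdot\rangle$ does to this recursion when $\bell_m = u_{M-m+1}\otimes\cdots\otimes u_M$ is a pure tensor. Concretely, writing $\hat\Phi_k(i) = \sum_m (\hat\Phi_k(i))_m$ with $(\hat\Phi_k(i))_m \in (\vs)^{\otimes m}$, the one-step update $\hat\Phi_k(i) = \sum_{j\sim i}\tfrac{1}{\deg(i)}\fm(\nf(j)-\nf(i))\cdot\hat\Phi_{k-1}(j)$ together with the definition of $\fm = \exp_\otimes$ gives, at tensor degree $m$,
\[
(\hat\Phi_k(i))_m = \sum_{j\sim i}\frac{1}{\deg(i)}\sum_{r=0}^m \frac{(\nf(j)-\nf(i))^{\otimes r}}{r!}\otimes (\hat\Phi_{k-1}(j))_{m-r}.
\]
Pairing with $u_{M-m+1}\otimes\cdots\otimes u_M$ and using that the inner product on $(\vs)^{\otimes m}$ factors across the tensor slots, the first $r$ slots contribute $\prod_{s=1}^{r}\langle u_{M-m+s},\nf(j)-\nf(i)\rangle = \prod_{s=1}^r C^{u_{M-m+s}}_{i,j}$, which is exactly the entry of the Hadamard product $\big(P\odot C^{u_{M-m+1}}\odot\cdots\odot C^{u_{M-m+r}}\big)_{i,j}$ after folding in the transition weight $P_{i,j}=1/\deg(i)$; the remaining $m-r$ slots pair with $(\hat\Phi_{k-1}(j))_{m-r}$ to give $\langle \bell_{m-r}, \hat\Phi_{k-1}(j)\rangle = f_{k-1,m-r}(j)$ by the inductive hypothesis. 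Summing over $j$ turns the Hadamard-weighted sum over neighbours into the matrix-times-vector products in~\eqref{eq:recursive_low_rank}, with the $r=0$ term producing the leading $P\cdot f_{k-1,m}$ and the $r\ge 1$ terms producing the $\tfrac1{r!}(P\odot C^{u_{M-m+1}}\odot\cdots\odot C^{u_{M-m+r}})\cdot f_{k-1,m-r}$ terms.

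For the base case $k=1$, the walk has a single increment, so $\hat\Phi_1(i) = \sum_{j\sim i}\tfrac{1}{\deg(i)}\exp_\otimes(\nf(j)-\nf(i))$, whose degree-$m$ part is $\sum_{j\sim i}\tfrac1{\deg(i)}\tfrac{(\nf(j)-\nf(i))^{\otimes m}}{m!}$; pairing against all $m$ slots of $u_{M-m+1}\otimes\cdots\otimes u_M$ yields $\tfrac1{m!}\sum_{j\sim i}P_{i,j}\prod_{s=1}^m C^{u_{M-m+s}}_{i,j}$, which is precisely the $i$-th entry of $\tfrac1{m!}(P\odot C^{u_{M-m+1}}\odot\cdots\odot C^{u_M})\cdot\one$, matching the definition of $f_{1,m}$. (One should also note the convention for $m=0$: $\bell_0=1$, so $\langle \bell_0,\hat\Phi_k(i)\rangle_0$ is just the scalar degree-$0$ component, which is constantly $1$ for every $k$, consistent with the $r=m$ term of the recursion feeding $f_{k-1,0}\equiv\one$; I will record this as a remark so the $f_{k-1,m-r}$ with $m-r=0$ terms are well-defined.)

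The step I expect to require the most care is the bookkeeping of which tensor slots of $\bell_m$ get consumed by the "new" increment $(\nf(j)-\nf(i))^{\otimes r}$ versus the "old" tail $(\hat\Phi_{k-1}(j))_{m-r}$. Because the algebra multiplication~\eqref{eq:algebra_multiplication} concatenates on the left with the new factor, the new increment occupies the \emph{first} $r$ slots, hence pairs with $u_{M-m+1},\dots,u_{M-m+r}$ — the lowest-indexed $u$'s — while the tail pairs with $u_{M-m+r+1},\dots,u_M$, which after reindexing is exactly $\bell_{m-r}$. Getting this index alignment right (and consistent with the ordering $\bell_m = u_{M-m+1}\otimes\cdots\otimes u_M$ chosen in~\eqref{eq:low rank functional}) is the crux; everything else is the linearity of the inner product and the distributivity already packaged in~\eqref{eq:algebra_multiplication} and~\eqref{eq:powers_tensor_adjacency}. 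Once the degree-graded one-step recursion for $\langle\bell_m,\hat\Phi_k(i)\rangle$ is derived, it is literally~\eqref{eq:recursive_low_rank}, and the induction closes. I would also remark that the generalization from the simple random walk to the attention-weighted walk is immediate, since the only property used is $\hat\Phi_k = \wP\cdot\hat\Phi_{k-1}$ with $\wP_{i,j}=P_{i,j}\fm(\nf(j)-\nf(i))$, so $P$ in the statement may be taken to be the general transition matrix.
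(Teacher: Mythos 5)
Your proposal is correct and follows essentially the same route as the paper's proof: induction on the walk length $k$ via the one-step recursion $\fmnz_k = \wP\,\fmnz_{k-1}$, expanding the degree-$m$ component through the graded product $\sum_{r=0}^m \frac{(\nf(j)-\nf(i))^{\otimes r}}{r!}\otimes(\fmnz_{k-1}(j))_{m-r}$ and factoring the inner product slot-by-slot so that the tail of $\bell_m$ reindexes to $\bell_{m-r}$. Your explicit handling of the $m=0$ convention ($f_{k,0}\equiv\one$) and the slot-alignment remark are welcome clarifications the paper leaves implicit, but they do not constitute a different argument.
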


\begin{remark}
  Overall, Eq.~\eqref{eq:recursive_low_rank} computes $f_{k, m}(i)$ for all $i \in \cV$, $k = 1, \ldots, K$, $m = 1, \ldots, M$ in $O(K \cdot M^2 \cdot N_E + M \cdot N_E \cdot d)$ operations, where $N_E \in \N$ denotes the number of edges; see App.~\ref{apx:low_rank_functionals}.
\end{remark}


\paragraph{Building Neural Networks.} Given an algorithm to compute rank-1 functionals of $\fmn(i)$, we proceed in the following steps to build neural network models: \begin{enumerate*}[label=(\arabic*)] \item \label{step1} given a random walk length $k \in \N$, max tensor degree $M \in \N$ and max tensor rank $R \in \N$, we introduce $R$ rank-1 degree-$M$ functionals $\tilde\bell_M^{j}$, and compute $f^j_{k,m}(i)$ for all $j=1,\dots,R$ and $m=1,\dots,M$ using Theorem \ref{thm:algo} that gives a linear mapping between $H \rightarrow \R^{R M}$. \item \label{step2} Apply a linear mixing layer $\R^{RM} \rightarrow \R^{R}$ to the rank-1 functionals to recover at most rank-$R$ functionals. Update the node labels, which now incorporate neighbourhood information up to hop-$k$.  \item Iterate this construction by repeating steps \ref{step1} and \ref{step2}. \end{enumerate*} 

The drawback of using at most rank-$R$ functionals is a loss of expressiveness: not all linear functionals of tensors can be represented as rank-$R$ functionals if $R$ is not large enough. However, we hypothesize that iterations of low-rank functionals allows to approximate any general linear functional as in \cite{toth_seq2tens_2020}. Further, this has a probabilistic interpretation, where instead of considering one long random walk we use a random walk over random walks, increasing the effective random walk length additively.

\section{Experiments} \label{sec:experiments}
We implemented the above approach and call the resulting model \textbf{G}raph\textbf{2T}ens \textbf{N}etworks since it represents the neighbourhood of a node as a sequence of tensors, which is further pushed through a low-tensor-rank constrained linear mapping, similarly to how neural networks linearly transform their inputs pre-activation. A conceptual difference is that in our case the non-linearity is applied first and the projection secondly, albeit the computation is coupled between these steps.

\paragraph{Experimental Setup.} The aim of our main experiment is to test the following key properties of our model: \begin{enumerate*}[label=(\arabic*)] \item ability to capture long-range interactions between nodes in a graph, \item robustness to pooling operations, hence making it less susceptible to the ``over-squashing'' phenomenon \cite{alon2021on} \end{enumerate*}. 
We do this by following the experiments in \cite{wu2021representing}. In particular, we show that our model is competitive with previous approaches for retaining long-range context in graph-level learning tasks but without computing all pairwise interactions between nodes, thus keeping the influence distribution localized \cite{xu2018representation}. We further give a detailed ablation study to show the robustness of our model to various architectural choices, and compare different variations on the algorithm from App.~\ref{apx:variations}.
As a second experiment, we follow the previous applications of diffusion approaches to graphs that have mostly considered inductive learning tasks, e.g.~on the citation datasets \cite{Chamberlain2021GRANDGN, thorpe_grand_2022, beltrami}. 
Our experimentation on these datasets are available in Appendix \ref{apx:further_exp}, where the model performs on par with short-range GNN models, but does not seem to benefit from added long-range information a-priori. However, when labels are dropped in a $k$-hop sanitized way as in \cite{rampavsek2021hierarchical}, the performance decrease is less pronounced.

\paragraph{Datasets.} We use two biological graph classification datasets (NCI1 and NCI109), that contain around ${\sim} 4000$ biochemical compounds represented as graphs with ${\sim} 30$ nodes on average~\cite{wale_comparison_2008, PubChem}. The task is to predict whether a compound contains anti-lung-cancer activity. The dataset is split in a ratio of $80\%-10\%-10\%$ for training, validation and testing. Previous work \cite{alon2021on} has found that GNNs that only summarize local structural information can be greatly outperformed by models that are able to account for global contextual relationships through the use of \textit{fully-adjacent} layers. This was further improved on by \cite{wu2021representing}, where a local neighbourhood encoder consisting of a GNN stack was upgraded with a Transformer submodule \cite{vaswani2017attention} for learning global interactions.

\paragraph{Model Details.} We build a GNN architecture primarily motivated by the GraphTrans (small) model from \cite{wu2021representing}, and only fine-tune the pre- and postprocessing layers(s), random walk length, functional degree and optimization settings. In detail, a preprocessing MLP layer with $128$ hidden units is followed by a stack of $4$ \GTN layers each with RW length-$5$, max rank-$128$, max tensor degree-$2$, all equipped with JK-connections \cite{xu2018representation} and a max aggregator. Afterwards, the node features are combined into a graph-level representation using gated attention pooling \cite{li2016gated}. The pooled features are transformed using a final MLP layer with $256$ hidden units, and then fed into a softmax classification layer. The pre- and postprocessing MLP layers employ skip-connections \cite{he2016deep}. Both MLP and \GTN layers are followed by layer normalization \cite{ba2016layer}, where GTN layers normalize their rank-$1$ functionals independently across different tensor degrees, which corresponds to a particular realization of group normalization \cite{wu2018group}. We randomly drop $10\%$ of the features for all hidden layers during training \cite{srivastava2014dropout}. The attentional variant, \GTAN also randomly drops $10\%$ of its edges and uses $8$ attention heads \cite{velickovic_graph_2018}. Training is performed by minimizing the categorical cross-entropy loss with an $\ell_2$ regularization penalty of $10^{-4}$. For optimization, Adam \cite{kingma2015adam} is used with a batch size of $128$ and an inital learning rate of $10^{-3}$ that is decayed via a cosine annealing schedule \cite{loshchilov2017sgdr} over $200$ epochs. Further intuition about the model and architectural choices are available in Appendix \ref{apx:model_detail}.

\paragraph{Baselines.} We compare against \begin{enumerate*}[label=(\arabic*)] \item the baseline models reported in \cite{wu2021representing}, \item variations of GraphTrans, \item other recently proposed hierarchical approaches for long-range graph tasks \cite{rampavsek2021hierarchical}. \end{enumerate*} Groups of models in Table \ref{table:nci_benchmark} are separated by dashed lines if they were reported in separate papers, and the first citation after the name is where the result first appeared. The number of GNN layers in HGNet are not discussed by \cite{rampavsek2021hierarchical}, and we report it as implied by their code. We organize the models into three groups divided by solid lines: \begin{enumerate*}[label=(\alph*)] \item \label{baseline:local} baselines that only apply neighbourhood aggregations, and hierarchical or global pooling schemes,  \item \label{baseline:pairwise} baselines that first employ a local neighbourhood encoder, and afterwards fully densify the graph in one way or another so that all nodes interact with each other \emph{directly}, \item our models that we emphasize thematically belong to \ref{baseline:local} \end{enumerate*}.

\begin{table}[t]
  \vspace{-5pt}
  \caption{Comparison of classification accuracies on NCI biological datasets, where we report mean and standard deviation over $10$ random seeds for our models.}
  \label{table:nci_benchmark}

  \centering
  \begin{small}
  \begin{tabular}{lcccc}
    \toprule
    \textbf{Model} & \textbf{GNN Type} & \textbf{GNN Count} & \textbf{NCI1 (\%)} & \textbf{NCI109 (\%)} \\
    \midrule
    Set2Set \cite{lee2019self, vinyals2016order} & GCN  & $3$ & $68.6 \pm 1.9$ & $69.8 \pm 1.2$ \\
    SortPool \cite{lee2019self,zhang2018end} & GCN & $3$ & $73.8 \pm 1.0$ & $74.0 \pm 1.2$ \\
    SAGPool\textsubscript{h} \cite{lee2019self} & GCN & $3$ & $67.5 \pm 1.1$ & $67.9 \pm 1.4$  \\
    SAGPool\textsubscript{g} \cite{lee2019self} & GCN & $3$ & $74.2 \pm 1.2$ & $74.1 \pm 0.8$ \\
    \middashrule
    GIN \cite{errica2019fair, xu_how_2019} & GIN & $8$ & $80.0 \pm 1.4$ & - \\
    \middashrule
    GCN + VN \cite{ying_hierarchical_2018,gilmer2017neural} & GCN & $2$ & $71.5$ & -\\
    HGNet-EdgePool \cite{ying_hierarchical_2018, schlichtkrull2018modeling} & GCN+RGCN & $3+2$ & $77.1$ & - \\
    HGNet-Louvain \cite{ying_hierarchical_2018, schlichtkrull2018modeling} & GCN+RGCN & $3+2$ & $75.1$ & - \\
    \midrule
    GIN + FA \cite{alon2021on,xu_how_2019} & GIN & $8$ & $81.5 \pm 1.2$ & - \\
    \middashrule 
    GraphTrans (small) \cite{wu2021representing,vaswani2017attention} & GCN & $3$ & $81.3 \pm 1.9$ & $79.2 \pm 2.2$ \\
    GraphTrans (large) \cite{wu2021representing,vaswani2017attention} & GCN & $4$ & $\color{gray} \mathbf{82.6 \pm 1.2}$ & $\color{gray} \mathbf{82.3 \pm 2.6}$ \\
    \midrule
    \textbf{\GTAN} (ours) & \GTAN & $4$ & ${\mathbf{81.9 \pm 1.2}}$ & $78.0 \pm 2.3$ \\
    \textbf{\GTN} (ours) & \GTN & $4$ & $80.7 \pm 2.5$ & ${\mathbf{78.9 \pm 2.5}}$ \\
    \bottomrule
  \end{tabular}
  \end{small}
  \vspace{-5pt}
\end{table}

\begin{table}[ht]
\caption{Accuracies computed over 5 seeds of \GTAN ablated by changing a single option.}
\label{table:ablation_g2tan}
\vspace{-5pt}
\begin{adjustbox}{center}
  \centering
  \begin{small}
  \begin{tabular}{lccccccc}
    \toprule
    \textbf{Dataset} & \texttt{NoDiff} & \texttt{NoZeroStart} & \texttt{NoAlgOpt} & \texttt{NoJK} & \texttt{NoSkip} & \texttt{NoNorm} & \texttt{AvgPool} \\
    \midrule
    NCI1 & $80.1 \pm 0.7$ &  $79.5 \pm 1.8$ & $81.6 \pm 1.6$ & $82.1 \pm 1.8$ & $81.8 \pm 0.9$ & $81.6 \pm 1.5$ & $82.4 \pm 0.9$ \\
    NCI109 & $78.2 \pm 1.2$ & $77.7 \pm 1.8$ & $77.5 \pm 1.3$ & $77.6 \pm 1.2$ & $78.3 \pm 1.3$ & $79.8 \pm 1.4$ & $77.6 \pm 1.3$ \\
    \bottomrule
  \end{tabular}
\end{small}
\end{adjustbox}
\vspace{-10pt}
\end{table}

\paragraph{Results.} In Table \ref{table:nci_benchmark}, we report the mean and standard deviation of classification accuracy computed over 10 different seeds. Overall, both our models improve over all baselines in group \ref{baseline:local} on both datasets, maximally by $1.9\%$ on NCI1 and by $4.8\%$ on NCI109. In group \ref{baseline:pairwise}, \GTAN is solely outperformed by GraphTrans (large) on NCI1 by only $0.7\%$. Interestingly, the attention-free variation, \GTN, performs better on NCI109, where it performs very slightly worse than GraphTrans (small).

\paragraph{Ablation Study.} Further to using attention, we give a list of ideas for variations on our models in Appendix \ref{apx:variations}, which can be summarized briefly as: \begin{enumerate*}[label=(\roman*)] \item using increments of node attributes (\texttt{Diff}), \item preprending a zero point to sequences (\texttt{ZeroStart}), \item  optimizing over the algebra embedding (\texttt{AlgOpt}) \end{enumerate*}, all of which are built into our main models. Further, the previous architectural choices aimed at incorporating several commonly used tricks for training GNNs. We investigate the effect of the previous variations and ablate the architectural ``tricks'' by measuring the performance change resulting from ceteris paribus removing it. Table \ref{table:ablation_g2tan} shows the result for \GTAN. To summarize the main observations, the model is robust to all the architectural changes, removing the layer norm even improves on NCI109. Importantly, replacing the attention pooling with mean pooling does not significantly affect the performance, but actually slightly improves on NCI1. Regarding variations, $\texttt{AlgOpt}$ slightly improves on both datasets, while removing $\texttt{Diff}$ and $\texttt{ZeroStart}$ significantly degrades the accuracy on NCI1. The latter means that translations of node attributes are important, not just their distances. We give further discussions and ablations in Appendix \ref{apx:exp_detail} including \GTN.

\paragraph{Discussion.} The previous experiments demonstrate that our approach performs very favourably on long-range reasoning tasks compared to GNN-based alternatives without global pairwise node interactions. Several of the works we compare against have focused on extending GNNs to larger neighbourhoods by specifically designed graph coarsening and pooling operations, and we emphasize two important points: \begin{enumerate*}[label=(\arabic*)] \item our approach can efficiently capture large neighbourhoods without any need for coarsening, \item it already performs well with simple mean-pooling as justified by Theorem \ref{thm:characterizing_rw_informal} and experimentally supported by the ablations. \end{enumerate*}  Although the Transformer-based GraphTrans slightly outperforms our model potentially due to its ability to learn global interactions, it is not entirely clear how much of the global graph structure it is able to infer from interactions of short-range neighbourhood summaries. Intuitively, better and larger summaries should also help the Transformer submodule to make inference about questions regarding the global structure. Finally, Transformer models can be bottlenecked by their quadratic complexity in nodes, while our approach only scales with edges, and hence, it can be more favourable for large sparse graphs in terms of computations.

\section{Conclusion} 
\label{sec:conclusion}
Inspired by classical results from analysis \cite{gaveau_principe_1977}, we introduce the hypo-elliptic graph Laplacian. 
This yields a diffusion equation and also generalizes its classical probabilistic interpretation via random walks but now takes their history into account. 
In addition to several attractive theoretical guarantees, we provide scalable algorithms. 
Our experiments show that this can lead to largely improved baselines for long-range reasoning tasks. 
A promising future research theme is to develop improvements for the classical Laplacian in this hypo-elliptic context; this includes lazy random walks~\cite{xhonneux_continuous_2020}; nonlinear diffusions~\cite{chamberlain_grand_2021}; and source/sink terms~\cite{thorpe_grand_2022}.
Another theme could be to extend the geometric study \cite{topping22} of over-squashing to this hypo-elliptic point of view which is naturally tied to sub-Riemannian geometry \cite{strichartz1986sub}.

\section*{Acknowledgements}
C.T. is supported by a Mathematical Institute Award from the University of Oxford. C.H. and D.L. are supported by NCCR-Synapsy Phase-3 SNSF grant number 51NF40-185897.
H.O.~is supported by the DataSig Program [EP/S026347/1], the Alan Turing Institute, the Oxford-Man Institute, and the CIMDA collaboration by City University Hong Kong and the University of Oxford.



\bibliographystyle{plain}
\bibliography{higher_order_kernels}

\clearpage
\appendix

\section*{Appendix Outline}
Section \ref{apx:notation} summarizes the notation and objects used in this paper.
Section \ref{apx:tensors} gives general background on tensor and the algebra $\tensalg$.
Section \ref{apx:sequence features} discusses the feature map $\fms$ for sequences and possible choices for the lift $\fm$ from labels to the algebra $\tensalg$.
Section \ref{apx:details_diffusion} contains the proofs of our main theorems and some variations of  hypoelliptic diffusion.
Section \ref{apx:characterizing_rw} provides the formal statement of \Cref{thm:characterizing_rw_informal} and the extension to pooled features.
Section \ref{apx:low_rank_functionals} gives background and details of the low-rank algorithm.
Section \ref{apx:variations} discusses variations of the sequence feature map which lead to different node and graph features.
Section \ref{apx:exp} includes further experiments and discussion on the empirical results.

\section{Notation}
\label{apx:notation}

\renewcommand{\arraystretch}{1.25}
\begin{center}
\begin{longtable}{  m{0.13\textwidth}  m{0.81\textwidth}   } 
  \Xhline{1pt}
  Symbol & Meaning  \\ 
  \Xhline{1pt}
  \multicolumn{2}{c}{Fixed Parameters and Indices}\\
  \hline
  $d$ & dimension of node attributes \\
  $k$ & length of random walk \\
  $n$ & number of nodes in graph \\
    \hline
    \multicolumn{2}{c}{Sequence Features}\\
  \hline
  $\vs$ & finite dimensional vector space for node attributes \\
  $\Seq(\vs)$ & sequences $\bx=(x_0,\ldots,x_k)$ of arbitrary length $k$ in $\vs$ \\
  $\delta_k\bx$ & increments of a sequence where $\delta_0\bx \coloneqq x_0$ and $\delta_k\bx \coloneqq x_k - x_{k-1}$ for $k >0$ \\
  $\tensalg$  & tensor algebra of $\R^d$ (see~\Cref{apx:tensors}) \\
  $\fm$  & algebra lifting $\fm: \vs \rightarrow H$ \\
  $\exp_\otimes$ & tensor exponential $\exp_\otimes: \vs \rightarrow H$ (main example of algebra lifting) \\
$\fms$  & sequence feature map $\fms: \Seq(\vs) \rightarrow H$, where $\fms(\bx) = \fm(\delta_0 \bx) \cdots \fm(\delta_k \bx)$ \\
 \hline
    \multicolumn{2}{c}{Graphs, Adjacency and Laplacian Matrices}\\
  \hline
    $\cG$ & $\cG = (\cV, \cE, \nf)$ graph with vertex set, edge set, and node attributes $\nf: \cV \rightarrow \vs$\\
    $(B_k)_{k \ge 0}$ & simple random walk on graph (valued in $\cV$) \\
    $(L_k)_{k \ge 0}$ & lifted random walk over $\R^d$, $L_k \coloneqq f(B_k)$ \\
    $ \bL_k$ & lifted length $k$ random walk $\bL_k = (L_0, \ldots, L_k)$\\
$\fmn$ & $\fmn(i)=\E[\fms(L_1,\ldots,L_n)|L_0=i] \in H$ feature map for node $i \in \cV$\\
$\fmg$ & $\fmg(\cG) = \E[\fms(L_1,\ldots,L_n)] \in H$ feature map for graphs \\
$A$ & standard adjacency matrix \\
$\wA$  & tensor adjacency matrix \\
$P$ & standard transition matrix \\
$\wP$ & tensor transition matrix \\
$\cL$  & normalized graph Laplacian \\
$\wcL$ & normalized hypo-elliptic graph Laplacian \\
  \Xhline{1pt}
\end{longtable}
\end{center}
\renewcommand{\arraystretch}{1}
\paragraph{Notation Conventions}
\begin{itemize}
    \item Bold symbols are used for tensors and sequences; vectors are unbolded, such as $x \in \vs$
    \item Coordinates for vectors are denoted using superscripts: $x = (x^{(1)}, \ldots, x^{(d)}) \in \R^d$.
    \item Tensors are denoted by $\bv = (\bv_0, \bv_1, \bv_2, \ldots ) \in \tensalgps$, where $\bv_m \in (\vs)^{\otimes m}$.
    \item Sequences are denoted by $\bx = (x_0, \ldots, x_k) \in \Seq(\vs)$.
\end{itemize}
\section{Tensors and the Algebra \texorpdfstring{$\tensalg$}{H}}\label{apx:tensors}


In this section, we provide a brief overview of tensors on a finite-dimensional vector space $\R^d$, along with the resulting algebra $H$. 

\paragraph{Tensors on $\R^d$.}
While tensor products between vector spaces are defined more generally, our main focus is on defining the tensor powers of $\vs$, denoted by $(\vs)^{\otimes m}$ for some $m \in \N$. The tensor power $(\vs)^{\otimes m}$ is also a vector space. Given a basis $e^1, \ldots, e^d$ of $\vs$, we can define a basis of $(\vs)^{\otimes m}$ as the collection of all
\[
    e^I \coloneqq e^{i_1} \otimes \ldots \otimes e^{i_m}
\]
over all \emph{multi-indices} $(i_1, \ldots, i_m)$, where each $i_j \in [d]$. Thus, any element $\bv_m \in (\vs)^{\otimes m}$ can be represented as
\[
    \bv_m = \sum_{i_1, \ldots, i_m = 1}^d \bv_m^{(i_1, \ldots, i_m)} e^{(i_1, \ldots, i_m)},
\]
where the $\bv_m^{(i_1, \ldots, i_m)} \in \R$ specifies the coordinates of $\bv_m$. In particular, $(\vs)^{\otimes 1}$ is simply $\vs$ itself; $(\vs)^{\otimes 2}$ can be viewed as the space of $d \times d$ matrices; $(\vs)^{\otimes 3}$ is the space of $d \times d \times d$ arrays, etc.

Given two vectors $x, y \in \vs$ which we represent coordinate-wise as
\[
    x = (x^{(1)}, \ldots, x^{(d)}), \quad \quad y = (y^{(1)}, \ldots, y^{(d)}),
\]
the tensor product $x \otimes y \in (\vs)^{\otimes 2}$ is given by
\[
    (x\otimes y)^{(i,j)} \coloneqq x^{(i)} y^{(j)}.
\]
More generally, if $\bu_m \in (\vs)^{\otimes m}$ and $\bv_n \in (\vs)^{\otimes n}$, the tensor product $\bu_m \otimes \bv_n \in (\vs)^{\otimes (m+n)}$ is defined coordinate-wise by
\begin{equation}
\label{eq:tensor_product}
    (\bu_m \otimes \bv_n)^{(i_1, \ldots, i_{m+n})} \coloneqq \bu_m^{(i_1, \ldots, i_{m})} \bv_n^{(i_{m+1}, \ldots, i_{m+n})}.
\end{equation}
Furthermore, we note that $(\vs)^{\otimes m}$ inherits an inner product from $\vs$ through a choice of basis, as is shown in Equation~\eqref{eq:tensor_inner_product}.

\paragraph{Free Algebras.}
Our goal is to find a vector space $H$ that contains the vector space $\vs$ but is large enough to support a richer algebraic structure; in particular, a multiplication.
Formally, this means we look for an injective map $\vs \hookrightarrow H$ into an algebra $H$.
A classic mathematical construction that turns a vector space into an algebra is the \emph{free associative algebra}, defined as
\[
    \bigoplus_{m=0}^\infty (\vs)^{\otimes m} = \{ (\bv_0, \bv_1, \ldots, \bv_M, 0, \ldots) \, : \, \bv_m \in (\vs)^{\otimes m}, \, M \in \N\},
\]
where addition and multiplication of two elements $\bu = (\bu_0, \bu_1, \ldots)$ and $\bv = (\bv_0, \bv_1, \ldots)$ is given by defining the $(\vs)^{\otimes m} $ coordinate to be
\[
(\bu+ \bv)_m = \bu_m + \bv_m \in (\vs)^{\otimes m}, \quad \quad (\bu \cdot \bv)_m = \sum_{i=0}^m \bv_i \otimes \bw_{m-i} \in (\vs)^{\otimes m}.
\]
We emphasize that the multiplication is not commutative.
\paragraph{A Universal Property.}
Indeed, this construction is the most general way to turn $\vs$ into an algebra  as the following classical result shows: if we denote with $\iota: \vs \hookrightarrow \bigoplus_{m=0}^\infty (\vs)^{\otimes m}$ the embedding $\iota(x) = (0, x, 0, \ldots)$ then any linear map from $\vs$ into any associative algebra $A$ factors through $\iota$. 
In other words, given an associative algebra $A$ and any linear map $h: \vs \rightarrow A$, there exists a homomorphism of algebras $\tilde{h}: \bigoplus_{m=0}^\infty (\vs)^{\otimes m} \rightarrow H$ such the following diagram commutes
\[
\begin{tikzcd}[ampersand replacement=\&]
    \vs \ar{r}{h} \ar[swap]{d}{\iota} \& H \\
    \bigoplus_{m=0}^\infty (\vs)^{\otimes m}  \ar[dashed, swap]{ur}{\tilde{h}} \& .
\end{tikzcd}
\]
In a sense, this shows that $\bigoplus_{m=0}^\infty (\vs)^{\otimes m} $ is the ``most general algebra'' that $\vs$ embeds into. 
\paragraph{An Inner Product.}
While the free associative algebra satisfies the above universal property, it is convenient to work in the slightly larger space
\begin{equation}
    \prod_{m\geq 0} (\vs)^{\otimes m} = \{ (\bv_0, \bv_1,\bv_2,\ldots): \bv_m \in (\vs)^{\otimes m}\}
\end{equation}
and define an inner product
\begin{align}\label{eq: inner product sum}
    \langle \bu, \bv \rangle \coloneqq \sum_{m=0}^\infty \langle \bu_m, \bv_m\rangle_m
\end{align}
for elements  $\bu,\bv \in \prod_{m\geq 0} (\vs)^{\otimes m}$ for which the sum \eqref{eq: inner product sum} converges (it does not converge for general elements of $\prod_{m\geq 0} (\R^d)^{\otimes m}$).
The largest space for which this inner product exists, is 
\begin{equation}
    H \coloneqq \{ \bv \in \prod_{m\geq 0} (\vs)^{\otimes m} \, : \, \|\bv\| < \infty\} \text{ and }\|\bv\| \coloneqq \sqrt{ \langle \bv, \bv \rangle}.
\end{equation}
The space $H$ contains all the properties we require; it contains $\vs$ as a subspace, $\iota(\vs) \subset H$; the non-commutative multiplication structure, $\bu \cdot \bv \in H$ for $\bu,\bv \in H$; and a Hilbert space structure, $\langle \bu, \bv \rangle \in \R$ for $\bu,\bv \in H$. 

\paragraph{Linear Functionals.}
Any $\bell \in H$ can be treated as a linear functional by taking $\langle \bell, \cdot \rangle : H \rightarrow \R$, we will primarily consider \emph{finite linear functionals} 
\[
\bell = (\bell_0, \bell_1, \ldots, \bell_M, 0, \ldots) \in H,
\]
which have only finitely many nonzero coordinates.
Such finite linear functionals applied to our feature map $\fms(\bx) \in H$ for sequences $\bx \in \Seq(\vs)$, are rich enough so that
\[
\bx \mapsto \langle \bell, \fms(\bx) \rangle
\]
can approximate any functions $f(\bx)$ of sequences $\bx \in \Seq(\vs)$, and their collection characterizes the distribution of random sequences (i.e.~random walks),
\begin{align}
    \mu \mapsto \{ \langle \bell, \E_{\bx \sim \mu}[\fms(\bx)] \rangle: \bell=(\bell_0,\ldots,\bell_M,0,\ldots) \in H, M \ge 1 \}
\end{align}
is injective; see \Cref{thm:univ_char}.

\section{The Sequence Feature Map}\label{apx:sequence features}
In \Cref{sec:sequence features} we introduce the sequence feature map 
\begin{equation}
\fms: \Seq(\vs) \rightarrow H,\quad\fms(\bx) = \fm(\delta_0\bx) \cdots \fm(\delta_{k}\bx),
\end{equation}
for sequences in $\vs$ where
\[
\fm: \vs \to H
\]
is an injective map from $\vs$ into the algebra $H$. Here, $\delta_0 \bx\coloneqq x_0$ and $\delta_i\bx \coloneqq x_i - x_{i-1}$ for $i\ge 1$ denote the increments of a sequence $\bx=(x_0,\ldots,x_k)$.
Our main example is the tensor exponential
\begin{align}\label{eq:tensor exp}
\fm(x) = \exp_\otimes(x) = \left(\frac{x^{\otimes m}}{m!}\right)_{m \ge 0}.
\end{align}
The algebra $H$ is the free algebra discussed in detail in \Cref{apx:tensors}. 
In this case, a direct calculation shows that
\begin{align}\label{eq:discrete signature}
    \fms(\bx) = \left( \sum c(i_1,\ldots,i_m) \delta_{i_1} \bx \otimes \cdots \otimes \delta_{i_m} \bx \right)_{m \ge 0}
\end{align}
where the sum is taken over $i_1 \le \cdots \le i_m$ with $i_1,\ldots,i_m \in \{0,\ldots,k\}$ and the coefficients $c(i_1,\ldots,i_m) \in \R$ can be computed in explicit form, see \cite{toth_seq2tens_2020}.

\paragraph{Universality and Characteristicness}
Universality and characteristicness follow for our ``discrete time/sequence'' signatures $\fms(\bx) \in H$ from elementary arguments that we discuss below.
In our setting it is convenient to include the sequence coordinate in the lifting, that is we consider sequences 
\begin{equation}
\label{eq:time_parametrization}
\bar \bx = (\bar x_0, \ldots, \bar x_k), \text{ where } \bar x_i = (i, x_i) \in \R^{d+1}.
\end{equation}

\begin{theorem}
\label{thm:univ_char}
The \emph{time-parametrized sequence feature map} 
\begin{align}
\label{eq:fms_time_parametrized}
\fms: \Seq(\vs) \rightarrow H,\quad \fms(\bar \bx) = \fm(\delta_0 \bar \bx) \cdots \fm(\delta_k \bar \bx)
\end{align}
has the following properties:
\begin{enumerate}
    \item \textbf{Universality}: for any continuous\footnote{We use the metric $d(\bx, \by) = \sum_{i=1}^k \| (x_i - y_i) - (x_{i-1} - y_{i-1}) \|$ (if two sequences are of different lengths, we pad the end of the shorter one with the end point) to define the topology on $\Seq(\vs)$.}$f: \Seq(\vs) \to \R$, any $\epsilon >0$, and any compact set of sequences $K \subset \Seq(\vs)$, there exists a $\bell=(\bell_0,\bell_1,\ldots,\bell_m,0,0,\ldots)$ such that
    \begin{align}
        \sup_{\bx \in K} |f(\bar \bx) - \langle \ell, \fms(\bar \bx) \rangle| <\epsilon .
    \end{align}
    \item \textbf{Characteristicness}: let $\Prob(K)$ be the set of probability measures that are supported on a compact subset $K \subset \Seq(\vs)$. Then  the map
    \begin{align}
    \Prob(K) \to H, \quad \mu \mapsto \E_{\bx \sim \mu}[\fms(\bar \bx)]
    \end{align}
    is injective.
\end{enumerate}
\end{theorem}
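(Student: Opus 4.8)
The plan is to derive both statements from a single Stone--Weierstrass argument: Universality is the density of linear functionals of $\fms$ in $C(K;\R)$, and Characteristicness then falls out by a routine duality/Riesz argument. Fix a compact $K \subset \Seq(\vs)$ and set
\[
\cA \coloneqq \bigl\{\, \bx \mapsto \langle \bell, \fms(\bar\bx) \rangle \ :\ \bell = (\bell_0,\dots,\bell_M,0,\dots) \in H,\ M \ge 0 \,\bigr\} \subseteq C(K;\R).
\]
First I would check that $\cA$ is a subalgebra of $C(K;\R)$ containing the constants. Continuity of $\bx \mapsto \fms(\bar\bx)$ in $H$ is immediate from the definition (it is a finite product of $\exp_\otimes$'s applied to the increments, and products and $\exp_\otimes$ are continuous); the constants arise from $\bell = (c,0,0,\dots)$ because $\fms(\bar\bx)_0 = 1$ for every $\bx$; and closure under pointwise products is the shuffle identity: for finite $\bell,\bell'$ there is a finite $\bell''$ with $\langle \bell, \fms(\bar\bx)\rangle\,\langle \bell', \fms(\bar\bx)\rangle = \langle \bell'', \fms(\bar\bx)\rangle$ for all $\bx$. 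This holds because each factor $\exp_\otimes(\delta_i\bar\bx)$ is a grouplike element of $H$ (pairing against it is multiplicative for the shuffle product), grouplike elements are closed under the product of $H$, and hence $\fms(\bar\bx)$ is itself grouplike.

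The crux is that $\cA$ \emph{separates points} of $K$, i.e. that the time-parametrized feature map is injective. Here I would use that with $\bar x_i = (i,x_i)$ the first (``time'') coordinate of each increment $\delta_i\bar\bx$ equals $1$ for $i \ge 1$ and $0$ for $i=0$, so this coordinate encodes the combinatorics of the walk. Concretely: the degree-$1$ component of $\fms(\bar\bx)$ equals $(k, x_k)$, which already recovers the length $k$ and the endpoint $x_k$; and, writing $e^0$ for the time basis vector and $e^1,\dots,e^d$ for the others, the coefficient of the basis tensor $(e^0)^{\otimes m}\otimes e^j$ in $\fms(\bar\bx)$ equals an explicit linear combination $\sum_{i=1}^k \delta_i x^{(j)}\,\tfrac{i^{m+1}-(i-1)^{m+1}}{(m+1)!}$ of the increments; for $m = 0,1,\dots,k-1$ the resulting $k \times k$ coefficient matrix is nonsingular (equivalently, the pairing between step functions on $[0,k]$ and $1,\tau,\dots,\tau^{k-1}$ is nondegenerate, since a polynomial of degree $\le k-1$ whose integral over every unit interval vanishes must be zero), so all increments $\delta_i x$ and therefore $\bx$ itself are determined. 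This is the discrete avatar of the classical fact that the signature of a bounded-variation path with a strictly monotone coordinate is injective; one may alternatively invoke that uniqueness theorem directly. Once injectivity is known, distinct points of $K$ are separated by some $\langle \bell, \fms(\bar\cdot)\rangle \in \cA$, and Stone--Weierstrass gives $\overline{\cA} = C(K;\R)$, which is precisely the Universality claim.

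For Characteristicness, let $\mu,\nu \in \Prob(K)$ with $\E_{\bx\sim\mu}[\fms(\bar\bx)] = \E_{\bx\sim\nu}[\fms(\bar\bx)]$; these are well-defined Bochner integrals in $H$ because $\|\fms(\bar\bx)\|$ is bounded on the compact set $K$. Pairing with an arbitrary finite $\bell \in H$ --- which is a bounded linear functional on the Hilbert space $H$ and therefore commutes with the Bochner integral --- yields $\int \langle\bell,\fms(\bar\bx)\rangle\,d\mu = \int \langle\bell,\fms(\bar\bx)\rangle\,d\nu$, i.e. $\int g\,d\mu = \int g\,d\nu$ for every $g \in \cA$. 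By the Universality part just established, $\cA$ is dense in $C(K;\R)$, so this persists for all $g \in C(K;\R)$, and the Riesz representation theorem forces $\mu = \nu$.

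I expect the separation-of-points step to be the main obstacle: making the injectivity of the time-parametrized feature map precise and self-contained (rather than quoting signature uniqueness) requires the explicit coefficient computation above together with a genuine argument that the associated linear system is nonsingular, plus some bookkeeping to compare sequences of different lengths consistently with the padding convention built into the topology of $\Seq(\vs)$. The algebraic step (shuffle identity / grouplikeness) and the duality step are then routine.
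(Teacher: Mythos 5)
Your proof follows essentially the same route as the paper's: Stone--Weierstrass applied to the point-separating algebra of linear functionals $\bx \mapsto \langle \bell, \fms(\bar\bx)\rangle$ (with closure under products coming from the shuffle/grouplike identity), followed by the standard duality argument for characteristicness. The only substantive difference is that where the paper delegates point separation to a citation (Fliess, Corollary 4.9), you reconstruct the sequence explicitly from the coefficients of the tensors $(e^0)^{\otimes m}\otimes e^j$; that argument is sound --- the nonsingularity of the coefficient matrix via the Rolle-type counting of zeros is correct --- modulo the minor bookkeeping you already flag concerning the $m=0$ row (which picks up the zero-start contribution $x_0$) and the padding convention in the metric on $\Seq(\vs)$.
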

\begin{proof}
    This is a folk theorem in control and probability theory: to see universality, it is sufficient to verify that $\{ \langle \bell, \fms(\bar \bx) \rangle:\, \bell=(\bell_0,\ldots,\bell_m,0,\ldots),m \ge 0\}$ is a point-separating algebra for $C(K,\R)$ since then the result follows from the Stone-Weierstrass theorem. 
    Point separation follows from \cite[Corollary 4.9]{fliess1981fonctionnelles}, and a direct calculation shows the product of $\langle \bell, \fms(\bar \bx) \rangle$ and $\langle \bell', \fms(\bar \bx) \rangle $ is again a linear functional of $\fms(\bar \bx)$. This shows that the collection of functionals forms an algebra.
    
    The characteristicness follows since $\Prob(K)$ is contained in the dual space of the space of continuous functions of sequences $C(K,\R)$ and by universality, linear functionals of $\fms(\bar \bx)$ are dense in this space, so the result follows.
\end{proof}
This result shows that:
\begin{itemize}
    \item non-linear functions of sequences can be approximated using linear functionals in $H$; and
    \item the distribution of random sequences is characterized as the mean of our feature map.
\end{itemize}
In the terminology of statistical learning this says that $\fms$ is a universal and characteristic feature map for the set $\Seq(V)$ of sequences.
For more background and extensions to non-compact sets of sequences or paths, we refer to \cite{kiraly_kernels_2019,chevyrev_signature_2018} and \cite[Section 3.2]{bonnier2020adapted}; for a more geometric picture see \cite{lee_path_2020}.

\paragraph{Path Signatures.} We now explain the remark made at the end of Section \ref{sec:sequence features}, that for the choice of $\fm$ as the tensor exponential~\eqref{eq:tensor_exponential}, the resulting sequence feature map $\fms$ can be identified as the time discretization of a classical object in analysis, called the path signature.
First, we lift a sequence $\bx=(x_0,\ldots,x_k)$ from discrete time to continuous time by identifying it as the piecewise linear path $\bX=(\bX(t))_{t \ge0 }$,
\[
\bX(t) \coloneqq x_{i} + (t-i)(x_{i+1}-x_i) \text{ for }t \in \left[{i},{i+1}\right).
\]
Then a direct calculation shows that
\begin{align}
\fms(\bx) = 
\left( \int_0^k d\bX^{\otimes m}\right)_{m \ge 0} \text{where}\int_0^i d\bX^{\otimes m}\coloneqq \int_{0 \le t_1 \le \cdots \leq t_m \le i}  \dot \bX(t_1) \otimes \cdots \otimes \dot \bX(t_m) d t_1 \cdots d t_m
\end{align}
and $i \in [0,k]$; note that $\dot \bX(t)$ is well-defined for almost every $t \in [0,k]$ since $\bX$ is piecewise linear, which is sufficient to make sense of this integral as a Riemann-Lebesgue integral (and stochastic integrals allow to treat rougher paths).
Such sequences of iterated integrals are classical in analysis, probability theory, and control theory, and are known under various names (Path-ordered Exponential, Volterra series, Chen--Fliess Series, Chronological Exponential, etc.); we refer to them as \emph{path signatures} as they are known in probability theory.
\paragraph{Diffusions and their Generator.}
The path signature can even can be defined for highly irregular paths such as Brownian motion by using stochastic (Ito-Stratonovich) integrals. This is the connection to our main motivation: if $\bX=(\bX_t)_{t \ge 0}$ is a Brownian motion in $\R^d$, then its generator is the classical Laplacian and the diffusion of Brownian particles is captured with the classical diffusion PDE, the heat equation. Gaveau \cite{gaveau_principe_1977} showed that if we lift Brownian trajectories $t \mapsto \bX(t)$ evolving in the state space $\R^d$ into paths evolving in a richer state space $H$ via the above signature construction,
\begin{align} \label{eq: lifted BM}
t \mapsto \left (\int_{0}^t d\bX^{\otimes m}\right)_{m \ge 0},
\end{align}
then the stochastic process~\eqref{eq: lifted BM} is again a Markov process\footnote{The process \eqref{eq: lifted BM} is often called the canonical lift of Brownian motion to the free Lie group with $d$ generators. Strictly speaking, Gaveau uses the first $M$ iterated integrals, otherwise one deals with an "infinite-dimensional Lie group" which poses some technical challenges. This Lie group embeds into $H$ and in this article we only use the structure of $H$ and not the Lie group structure; see \cite{gaveau_principe_1977}.} and its generator satisfies a property called hypo-ellipticity (see the paragraph below).
Our approach can be seen as the analogous construction on a graph and in discrete time: in the same way that \eqref{eq: lifted BM} lifts Brownian motion from $\R^d$ to a process evolving in the state space $H$, the map $\fms$ lifts a simple random walk $(B_k)_{k \ge 0}$ on the graph to a random walk $\bL=(L_k)_{k \ge 0}$ in $H$.
\paragraph{Hypo-elliptic Operators, Sub-elliptic Operators, and their Geometry.}
A full discussion is beyond the scope of this article and several books \cite{hormander1967hypoelliptic} have been written about this topic. 
For the interested reader, we give a very short informal picture on the space $\R^d$ which motivates our nomenclature but otherwise this paragraph can be safely skipped.
A differential operator $\cL=\sum \sigma_{i,i} \frac{\partial^2}{\partial x_i \partial x_j} + \sum_i \mu_i \frac{\partial}{\partial x_j}$ is called elliptic if the matrix $(\sigma_{i,j}(x))_{i,j}$ is invertible for all $x \in \R^d$. 
Every (time-homogenous) Markov process $\bX=(\bX_t)_{t \ge 0}$ gives rise to a differential operator 
\[\cL f(x) = \lim_{t \to 0} \frac{\E[f(\bX_t)|\bX_0=x] - f(x)}{t}
\] 
but generically $\cL$ is not elliptic. 
An important example class of Markov processes are stochastic differential equations,
\begin{align}\label{eq:sde}
d\bX_t = \sum_{i=1}^e V_i(\bX_t) d \bB^i_t, \bX_0 \in \R^d,
\end{align}
where $\bB_t=(\bB_t^1,\ldots, \bB_t^e)$ denotes a Brownian motion in $\R^e$ and $V_1,\ldots,V_e$ are vector fields on $\R^d$.
By identifying the vectors fields $V_1,\ldots,V_e$ as differential operators, the generator of \eqref{eq:sde} can be written as
\begin{align}\label{eq:sum of squares}
\cL =\sum_{i=1}^e V_i^2,
\end{align}
which is in general not elliptic.
Sub- and hypo-ellipticity are properties of $\cL$ that are weaker than ellipticty: ellipticity implies sub-ellipticity and, by a classic theorem of H\"ormander, sub-ellipticity implies hypo-ellipticity.
Hypo-ellipticy is in turn general enough to cover many important examples for which (sub-)ellipticity fails.
For example, another celebrated result of H\"ormander is that the SDE generator \eqref{eq:sum of squares} is hypo-elliptic, whenever the Lie algebra generated by the vector fields and their brackets, spans at every point the full space $\R^d$. 
This not only allows us to study the properties of a large class of diffusions but also provides a natural link with geometry: the set of vector fields $V_1,\ldots,V_e$ determines a subset of $\R^d$ that the SDE evolves on. 
For general hypo-elliptic operators, this set is not all of $\R^d$ and it is not even a Riemannian manifold; rather, it is a sub-Riemannian manifold. 
Intuitively, sub-Riemannian geometry is "rougher" than Riemannian geometry (e.g.~no canonical connections exist) but still regular enough so that one can use geometric tools to study the underlying stochastic process and vice versa; see for example \cite{strichartz1986sub, gaveau_principe_1977}. 
This includes SDEs such as those evolving in free objects: for a natural choice of vector fields $V_1,\ldots,V_e$ the solution of the SDE \eqref{eq:sde} is \eqref{eq: lifted BM}, see \cite{gaveau_principe_1977}.

\section{Further Details on Hypo-elliptic Graph Diffusion} \label{apx:details_diffusion}

In Section \ref{sec:diffusion}, we introduced tensor analogues of classical matrix operators, which were then used to define hypo-elliptic graph diffusion. These tensor-valued matrices allow us to efficiently represent random walk histories on a graph, which can be manipulated using matrix operations. In this section, we discuss further details on the tensor adjacency matrix, provide a proof of~\Cref{thm:non_abelian_laplacian}, and introduce a variation of hypo-elliptic graph diffusion.

As in~\Cref{sec:diffusion}, we fix a labelled graph $\cG = (\cV, \cE, \nf)$, where the nodes are denoted by integers, $\cV = \{1, \ldots, n\}$, and $\nf: \cV \rightarrow \vs$ denotes the continuous node attributes. 

\paragraph{Powers of the Tensor Adjacency Matrix.} Recall that the powers of the classical adjacency matrix $A$ counts the number of walks between two nodes on the graph. In particular, given $k \in \N$, and two nodes $i,j \in \cV$, the result follows as a consequence of the sparsity pattern of $A$,
\begin{align}
\label{eq:classical_adj_powers}
    (A^k)_{i,j} = \sum_{i_1, \ldots, i_{k-1}=1}^n A_{i,i_1} \cdot A_{i_1, i_2} \cdots A_{i_{k-1},j} = \sum_{i=i_0 \sim \ldots \sim i_k = j} 1.
\end{align}
Note that the product $A_{i,i_1} \cdots A_{i_{k-1}, j} = 0$ unless each pair of consecutive indices are adjacent in the graph, namely $i_{q-1} \sim i_q$ for all $q = 1, \ldots, k$. Applying the same procedure to the tensor adjacency matrix from Equation~\eqref{eq:exponential_adjacency_matrix}, we obtain a summary of all walks between two nodes, rather than just the number of walks. In particular,

\begin{align}
     (\wA^k)_{i,j} & =\sum_{i=i_0\sim\cdots\sim i_k=j}\wA_{i,i_1}\cdot\wA_{i_1,i_2}\cdots \wA_{i_{k-1},j}  \\
    & = \sum_{i=i_0\sim\cdots\sim i_k=j}\fm(\nf(i_1)-\nf(i)) \cdot \fm(\nf(i_2)-\nf(i_1))\cdots \fm(\nf(j)-\nf(i_{k-1}))\\
     &= \sum_{i=i_0\sim\cdots\sim i_k=j}\fm(\delta_1 \bx) \cdots \fm(\delta_k \bx),
\end{align}
where $\bx = (f(i_0), \ldots, f(i_k))$ denotes the lifted sequence in the vector space $\vs$. Note that this corresponds to the sequence feature map \emph{without} the initial point $\delta_0 \bx$.

\paragraph{Powers of the Tensor Transition Matrix.} We now consider powers of the classical transition matrix $P = I - \cL$, where $\cL = I - D^{-1} A$ is the normalized graph Laplacian. The entries of $P^k$ provide length $k$ random walk probabilities; in particular, we have
\begin{align}
    (P^k)_{i,j} = \sum_{i=i_0 \sim \ldots \sim i_k = j} \frac{1}{\deg(i_0) \deg(i_1) \ldots \deg(i_{k-1})} = \PP[B_k = j | B_0 = i].
\end{align}

The powers of the tensor transition matrix $\wP =I - \wcL$, where $\wcL = I - D^{-1}\wA$ is the hypo-elliptic Laplacian, will be the conditional expectation of the sequence feature map of the random walk process. In particular, 
\begin{align}\label{eq:powers_tensor_transition}
    (\wP)^k_{i,j} & = \sum_{i=i_0\sim\cdots\sim i_k=j} \fm(\delta_1 \bx) \cdots \fm(\delta_k \bx) \PP[B_1 = i_1, B_2 = i_2, \ldots, B_k = i_k | B_0 = i]. \\
\end{align}


\paragraph{Proof of Hypo-elliptic Diffusion Theorem.} Recall from~\Cref{sec:diffusion} that the \emph{hypo-elliptic graph diffusion equation} is defined by
\[
    \bv_k - \bv_{k-1} = -\wcL \bv_{k-1}, \quad \bv_0 = \one_H,
\]
where $\one^T_H = (1_H, \ldots, 1_H) \in H^n$ is the all-ones vector in $H$. Using the above computations for powers of the tensor transition matrix, we can prove~\Cref{thm:non_abelian_laplacian}, which is restated here.
\begin{customthm}{1}
    Let $k \in \N$, $\bL_k = (L_0, \ldots, L_k)$ be the lifted random walk from~\eqref{eq:lifted_random_walk}, and $\wP = I - \wcL$ be the \emph{tensor adjacency matrix}. The solution to the hypo-elliptic graph diffusion equation~\eqref{eq:non_abelian_diffusion_equation} is
    \[
        \bv_k = \left( \E[\fm(\delta_1\bL_k) \cdots \fm(\delta_k \bL_k) | B_0 = i]\right)_{i=1}^n = \wP^k \one_H.
    \]
    Furthermore, if $F \in H^{n \times n}$ is the diagonal matrix with $F_{i,i} = \fm(f(i))$, then
    \[
        F\bv_k = \left(\E[\fms(\bL_k)| B_0 = i]\right)_{i=1}^n.
    \]
\end{customthm}


\begin{proof}[Proof of~\Cref{thm:non_abelian_laplacian}]
We begin by proving the first equation. From the definition of hypo-elliptic diffusion, it is straightforward to see that $$\bv_k= (I-\wcL)\bv_{k-1} = \wP\bv_{k-1}  = \wP^k\bv_0 = \wP^k\one_H.$$

We prove coordinate-wise that $\bv_k = \left( \E[\fm(\delta_1\bL_k) \cdots \fm(\delta_k \bL_k) | B_0 = i]\right)_{i=1}^n $. Indeed, using the above and Equation~\eqref{eq:powers_tensor_transition}, the $i$-th coordinate of $\bv_k$ is 
\begin{align}
    \bv_k^{(i)} = (\wP^k\one_H)^{(i)} = \sum_{j=1}^n (\wP^k)_{i,j}  &= \sum_{i=i_0 \sim \ldots \sim i_k} \fm(\delta_1 \bx) \cdots \fm(\delta_k \bx) \PP[B_1 = i_1,  \ldots, B_k = i_k | B_0 = i] \\
    & = \E[\fm(\delta_1\bL_k) \cdots \fm(\delta_k \bL_k) | B_0 = i],
\end{align}
where $\bx = (f(i_0), \ldots, f(i_k))$ is the lifted sequence corresponding to a walk $i_0 \sim \ldots \sim i_k$ on the graph. 
Next, we will also prove the second equation coordinate-wise. Using the above result, we have
\begin{align}
    (F\bv_k)^{(i)} = \fm(f(i)) \E[\fm(\delta_1\bL_k) \cdots \fm(\delta_k \bL_k) | B_0 = i] = \E[\fms(\bL_k) | B_0 = i],
\end{align}
where we use the fact that $\delta_0\bL_k = L_0 = f(i)$ when we condition $B_0 = i$. 
\end{proof}

\paragraph{Forward Hypo-elliptic Diffusion.} 
In the classical setting, we can consider both the forward and backward Kolmogorov equations. Throughout the main text and in the appendix so far, we have been considering the \emph{backward} variants. In this section, we formulate the \emph{forward} analogue of hypo-elliptic diffusion.
In the classical graph setting, this corresponds to the following forward equation for $U_k \in \R^{n \times d}$ given by
\begin{align}
    U^T_k - U^T_{k-1} = - U_{k-1}^T\cL , \quad U^{(i)}_0 = \nf(i)
\end{align}
where the initial condition $U_0 \in \R^{n \times d}$ is specified by the node attributes. Note that because $P = D^{-1} A$ is right stochastic\footnote{Each column sums to one and hence multiplying on the left by a row vector conserves its mass.}, this variation of the graph diffusion equation conserves mass in each coordinate of the node attributes at every time step.

 
Similarly we can formulate the forward hypo-elliptic graph diffusion equation for $\bv_k \in \tensalgps^n$ as
\begin{equation}
\label{eq:non_abelian_forward_equation}
    \bv^T_k - \bv^T_{k-1} = - \bv^T_{k-1}\wcL , \quad \bv_0 = n^{-1}\one_H.
\end{equation}
The solution of the Equation \ref{eq:non_abelian_forward_equation} is given below.  

\begin{theorem}
\label{thm:forward_diffusion}
    Let $k \in \N$, $\bL_k = (L_0, \ldots, L_k)$ be the lifted random walk from~\eqref{eq:lifted_random_walk}, and $\wP = I - \wcL$ be the \emph{tensor adjacency matrix}. The solution to the forward hypo-elliptic graph diffusion equation~\eqref{eq:non_abelian_forward_equation} is
    \[
        \bv^T_k = \left( \PP[B_k = i]\E[\fm(\delta_1\bL_k) \cdots \fm(\delta_k \bL_k) | B_k = i]\right)_{i=1}^n = \frac{1}{n}\one^T_H\wP^k.
    \]
    Furthermore, if $F \in H^{n \times n}$ is the diagonal matrix with $F_{i,i} = \fm(f(i))$, then
    \[
        \frac{1}{n}\one^T_H F \wP^k= \left(\PP[B_k = i] \E[\fms(\bL_k)| B_k = i]\right)_{i=1}^n.
    \]
\end{theorem}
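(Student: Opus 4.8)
The plan is to mirror the proof of Theorem~\ref{thm:non_abelian_laplacian} but track the evolution of a row vector under right-multiplication by $\wP$ rather than a column vector under left-multiplication. First I would observe that the forward equation~\eqref{eq:non_abelian_forward_equation} telescopes in exactly the same way as the backward one: from $\bv_k^T - \bv_{k-1}^T = -\bv_{k-1}^T\wcL$ we get $\bv_k^T = \bv_{k-1}^T(I-\wcL) = \bv_{k-1}^T\wP$, and iterating with the initial condition $\bv_0 = n^{-1}\one_H$ yields $\bv_k^T = n^{-1}\one_H^T\wP^k$. This establishes the closed form on the right-hand side of both claimed identities with essentially no work; the content is in identifying the probabilistic meaning of the entries.

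Next I would compute the $i$-th coordinate of $n^{-1}\one_H^T\wP^k$ using the formula~\eqref{eq:powers_tensor_transition} for the entries of $\wP^k$. We have
\[
\left(\frac{1}{n}\one_H^T\wP^k\right)^{(i)} = \frac{1}{n}\sum_{j=1}^n (\wP^k)_{j,i} = \frac{1}{n}\sum_{j=i_0 \sim \cdots \sim i_k = i} \fm(\delta_1\bx)\cdots\fm(\delta_k\bx)\,\PP[B_1 = i_1, \ldots, B_k = i \mid B_0 = j],
\]
where $\bx = (f(i_0),\ldots,f(i_k))$ is the lifted walk. Since the starting node $B_0$ is uniform, $\frac{1}{n}\PP[\,\cdot \mid B_0 = j] = \PP[B_0 = j,\,\cdot\,]$, so summing over $j$ (equivalently over all $k$-step walks ending at $i$) gives $\sum_{\text{walks ending at }i}\fm(\delta_1\bx)\cdots\fm(\delta_k\bx)\,\PP[\text{that walk}] = \E[\fm(\delta_1\bL_k)\cdots\fm(\delta_k\bL_k)\,\one\{B_k = i\}]$. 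Factoring this unconditioned expectation over $\{B_k = i\}$ as $\PP[B_k = i]\,\E[\,\cdot \mid B_k = i]$ gives the first claimed identity. For the second identity I would left-multiply by $F$: because $F$ is diagonal with $F_{i,i} = \fm(f(i))$ and $\delta_0\bL_k = L_0 = f(i)$ when we further condition on $B_0$ being the (random) start of a walk ending at $i$, the extra factor $\fm(f(i))$ out front is exactly $\fm(\delta_0\bL_k)$, which prepends the initial point and turns $\fm(\delta_1\bL_k)\cdots\fm(\delta_k\bL_k)$ into the full $\fms(\bL_k)$; hence $\frac{1}{n}\one_H^T F\wP^k$ has $i$-th coordinate $\PP[B_k = i]\,\E[\fms(\bL_k)\mid B_k = i]$, as claimed.

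The main subtlety — and the step I would be most careful about — is the bookkeeping around conditioning on the \emph{endpoint} $B_k = i$ rather than the start, together with the factor $n^{-1}$ and the direction of stochasticity. Unlike $P$, the transition matrix $P = D^{-1}A$ is row-stochastic, so $\one_H^T P^k$ is not a probability row vector in the naive sense; the correct reading is that $(P^k)_{j,i}/n = \PP[B_0 = j, B_k = i]$ and summing over $j$ produces the marginal $\PP[B_k = i]$ times the conditional law of the path reversed-in-indexing. I would state this carefully, noting that the product $\fm(\delta_1\bx)\cdots\fm(\delta_k\bx)$ is read along the walk in the forward direction $i_0 \to i_k = i$ even though we are summing over the terminal node, so no path-reversal of the tensor product itself is needed — only the conditioning event changes. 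Beyond that, everything is a routine reindexing of the argument already given for Theorem~\ref{thm:non_abelian_laplacian}, so the proof can be kept short by explicitly invoking~\eqref{eq:powers_tensor_transition} and the telescoping identity and then pointing to the analogous manipulation in the proof of Theorem~\ref{thm:non_abelian_laplacian}.
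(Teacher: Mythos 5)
Your proposal is correct and follows essentially the same route as the paper's proof: telescope the forward recursion to get $\bv_k^T = n^{-1}\one_H^T\wP^k$, expand the $i$-th coordinate via the walk-sum formula~\eqref{eq:powers_tensor_transition}, absorb the factor $n^{-1}=\PP[B_0=i_0]$ into the joint law of the walk, and re-factor that joint law as $\PP[B_k=i]$ times the conditional law given the endpoint, with $F$ supplying the initial factor $\fm(L_0)$ for the second identity. Your explicit remark that the tensor product is still read in the forward direction along the walk (only the conditioning event changes) is a point the paper leaves implicit, but the argument is otherwise the same.
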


\begin{proof}
The proof proceeds in the same way as the backward equation. By definition of the forward hypo-elliptic diffusion, we have
\[
    \bv_k^T = \bv_0^T \wP^k = \frac{1}{n} \one^T_H \wP^k.
\]

Now, recall that the initial point of the random walk process is chosen uniformly over all nodes; in other words, $\PP[B_0 = i] = \frac{1}{n}$. Then, we show $\bv^T_k = \left( \E[\fm(\delta_1\bL_k) \cdots \fm(\delta_k \bL_k) | B_k = i]\right)_{i=1}^n$ coordinate-wise as
\begin{align}
    \bv_k^{(i)} &= \frac{1}{n} \sum_{j=1}^n (\wP^k)_{j,i}\\
    &= \sum_{i_0 \sim \ldots \sim i_k = i} \fm(\delta_1 \bx) \cdots \fm(\delta_k \bx) \PP[B_1 = i_1,  \ldots, B_k = i | B_0 = i_0] \PP[B_0 = i_0] \\
    & = \sum_{i_0 \sim \ldots \sim i_k = i} \fm(\delta_1 \bx) \cdots \fm(\delta_k \bx) \PP[B_0 = j,  \ldots, B_{k-1} = i_{k-1} | B_k = i] \PP[B_k = i]\\
    & =  \PP[B_k = i] \E[\fm(\delta_1\bL_k) \cdots \fm(\delta_k \bL_k) | B_k = i],
\end{align}
where $\bx = (f(i_0), \ldots, f(i_k))$ is the lifted sequence corresponding to a walk $i_0 \sim \ldots \sim i_k$ on the graph. We will now prove the second equation. Note that we have
\[
    (F\wP^k)_{i,j} = \sum_{i=i_0 \sim \ldots \sim i_k = j} \fms(\bx) \PP[B_1 = i_1, \ldots, B_k = i_k \given B_0 = i].
\]
Then, following the same reasoning as the first equation, we obtain the desired result.
\end{proof}


\paragraph{Weighted Graphs.} In the prior discussion, we considered simple random walks in which the walk chooses one of the nodes  adjacent to the current node uniformly at random. We can instead consider more general random walks on graphs, which can be described using a weighted adjacency matrix,
\[
    A_{i,j} = \left\{ \begin{array}{cl}
        c_{i,j} & : i \sim j \\
        0 &: \text{otherwise}. 
    \end{array}\right. 
\]
In this case, we define the diagonal weighted degree matrix to be $D_{i,i} = \sum_{i \sim j} A_{i,j}$. We now define a weighted random walk $(B_k)_{k \ge 0}$ on the vertices $\cV$ where the transition matrix is given by
\[
    P_{i,j} \coloneqq D^{-1}A =\PP(B_k = j \given B_{k-1}=i) = \left\{ \begin{array}{cl}\frac{c_{i,j}}{\sum_{i \sim j'} c_{i,j'}} &: i\sim j\\ 0 &: \text{otherwise}.\end{array}\right.
\]
With these weighted graphs, the powers of the adjacency and transition matrix can be interpreted in a similar manner as the standard case. Powers of the adjacency matrix provide total weights over walks, while the powers of the transition matrix provide the probability of a weighted walk going between two nodes after a specified number of steps. In particular, 
\begin{align}
    (A^k)_{i,j} &= \sum_{i \sim i_1 \sim \ldots \sim i_{k-1} \sim j} c_{i,i_1} \ldots c_{i_{k-1}, j}\\
    (P^k)_{i,j} &= \PP[B_k = j \given B_0 = i].
\end{align}
The tensor adjacency and tensor transition matrices are defined in the same manner as
\begin{align}
    \wA_{i,j} &\coloneqq \left\{\begin{array}{cl} c_{i,j}\fm(f(j) - f(i)) & : i \sim j \\ 0 & : \text{otherwise} \end{array}\right. \\
    \wP_{i,j} &\coloneqq D^{-1} \wA = \left\{\begin{array}{cl} P_{i,j}\fm(f(j) - f(i)) & : i \sim j \\ 0 & : \text{otherwise} \end{array}\right.
\end{align}
Note that powers of this weighted tensor transition matrix are exactly the same as the unweighted case from Equation~\eqref{eq:powers_tensor_transition}, and thus the weighted version of both~\Cref{thm:characterizing_rw_informal} and~\Cref{thm:forward_diffusion} immediately follow.

\section{Characterizing Random Walks}
\label{apx:characterizing_rw}

In this appendix, we will provide further details on how the features obtained via hypo-elliptic diffusion are able to characterize the underlying random walk processes. These results rely on the characteristic property of the time-parametrized sequence feature map from~\Cref{thm:univ_char}.

\paragraph{Computation with Time Parametrization.}
Recall that the time-parametrized sequence feature map from Equation~\eqref{eq:fms_time_parametrized} applies the algebra lifting to the time-parametrized sequence $\bar\bx \coloneqq (\bar x_0, \ldots, \bar x_k)$, where $\bar x_i = (i, x_i)$. Note that we have $\delta_i \bar \bx = (1, x_i - x_{i-1})$. Thus, the hypo-elliptic diffusion equations and the low-rank algorithm given in~\Cref{thm:algo} easily extends to this setting.

\paragraph{Characterizing Random Walks.}
Recall that hypo-elliptic diffusion yields a feature map for labelled graphs by mean-pooling the individual node features as
\[
    \fmg(\cG) = \E[\fms(\bL_k)],
\]
where $\bL_k = (L_0, \ldots, L_k)$ is the lifted random walk process in $\vs$. We will now prove~\Cref{thm:characterizing_rw_informal}, which we restate here with more details.

\begin{customthm}{2}
    Let
    \begin{align}
    \fms: \Seq(\vs) \rightarrow H,\quad \fms(\bar \bx) = \fm(\delta_0 \bar \bx) \cdots \fm(\delta_k \bar \bx),
    \end{align}
    where $\bar\bx$ appends the time parametrization to the sequence $\bx$ as in Equation~\eqref{eq:time_parametrization}. Furthermore, suppose we have the resulting graph feature map
    \[
        \fmg(\cG) = \E[\fms(\bL_k)].
    \]
    Let $\cG$ and $\cG'$ be two labelled graphs, and $\bL_k = (L_0, \ldots, L_k)$ and $\bL'_k = (L'_0, \ldots, L'_k)$ be the $k$-step lifted random walk as defined in Equation~\eqref{eq:lifted_random_walk}, given the random walk processes $B$ and $B'$ on $\cG$ and $\cG'$ respectively. Then, $\Psi(\cG) = \Psi(\cG')$ if and only if the distributions of $\bL_k$ and $\bL'_k$ are equal. 
\end{customthm}

\begin{proof}
    First, if the distributions of the two random walks $\bL_k$ and $\bL'_k$ are equal, then it is clear that $\E[\fms(\bL_k)] = \E[\fms(\bL_k')]$.
    
    Next, suppose $\E[\fms(\bL_k)] = \E[\fms( \bL_k')]$. Then, note that the random walk distributions are finitely supported (hence compactly supported) distributions, and thus by~\Cref{thm:univ_char}, they must be equal.
\end{proof}

This result shows that the hypo-elliptic diffusion completely characterizes random walk histories; thus providing a highly descriptive summary of labelled graphs. There is an analogous result for the individual node features in terms of conditional expectations.

\begin{theorem}
    Let
    \begin{align}
    \fms: \Seq(\vs) \rightarrow H,\quad \fms(\bar \bx) = \fm(\delta_0 \bar \bx) \cdots \fm(\delta_k \bar \bx),
    \end{align}
    where $\bar\bx$ appends the time parametrization to the sequence $\bx$ as in Equation~\eqref{eq:time_parametrization}. Furthermore, suppose we have the resulting node feature map
    \[
        \fmn(i) = \E[\fms(\bL_k) | B_0 = i].
    \]
    Let $\cG$ and $\cG'$ be two labelled graphs, and $\bL_k = (L_0, \ldots, L_k)$ and $\bL'_k = (L'_0, \ldots, L'_k)$ be the $k$-step lifted random walk as defined in Equation~\eqref{eq:lifted_random_walk}, given the random walk processes $B$ and $B'$ on $\cG$ and $\cG'$ respectively. Then for two nodes $i \in \cV$ and $i' \in \cV'$ on the two respective graphs, $\fmn(i) = \fmn(i')$ if and only if the conditional distributions of $\PP[\bL_k \given B_0=i]$ and $\PP[\bL'_k \given B'_0=i']$ are equal. 
\end{theorem}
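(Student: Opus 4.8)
The plan is to mirror the proof of the graph-feature version (Theorem~2 above), replacing the unconditional law of the random walk by its law conditioned on the starting node, and then appealing to the characteristicness clause of Theorem~\ref{thm:univ_char}.

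The forward implication is immediate: if $\PP[\bL_k \given B_0 = i] = \PP[\bL'_k \given B'_0 = i']$ as probability measures on $\Seq(\vs)$, then in particular their $H$-valued integrals against $\fms$ agree, so $\fmn(i) = \E[\fms(\bL_k) \given B_0 = i] = \E[\fms(\bL'_k) \given B'_0 = i'] = \fmn(i')$.

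For the converse, suppose $\fmn(i) = \fmn(i')$. Let $\mu$ and $\mu'$ denote the conditional laws of the time-parametrized lifted walks $\bar\bL_k$ and $\bar\bL'_k$ given $B_0 = i$ and $B'_0 = i'$ respectively; these are probability measures on $\Seq(\R^{d+1})$. Since a length-$k$ walk on a finite graph has only finitely many realizations, each of $\mu, \mu'$ is finitely supported, hence supported on a common compact set $K \subset \Seq(\R^{d+1})$ — take $K$ to be the finite union of the two supports. Because adjoining the time coordinate is a deterministic bijection, the hypothesis $\fmn(i) = \fmn(i')$ reads $\E_{\bx \sim \mu}[\fms(\bar\bx)] = \E_{\bx \sim \mu'}[\fms(\bar\bx)]$, and Theorem~\ref{thm:univ_char} tells us that $\nu \mapsto \E_{\bx \sim \nu}[\fms(\bar\bx)]$ is injective on $\Prob(K)$. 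Hence $\mu = \mu'$, i.e. the conditional distributions of $\bL_k$ and $\bL'_k$ coincide.

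There is essentially no hard step: the entire analytic content has already been packaged into Theorem~\ref{thm:univ_char}, and what remains is bookkeeping. The only mild points to verify are (i) that conditioning on $B_0 = i$ yields a genuine probability measure on sequences, which holds whenever $i$ is non-isolated (and if $i$ is isolated, $\bL_k$ is a.s.\ the constant sequence $(f(i), \ldots, f(i))$, so the claim is trivial), and (ii) that the two supports, living a priori in sequence spaces attached to different graphs, can be placed inside one compact subset of $\Seq(\R^{d+1})$ — true since both are finite sets of finite sequences in the same ambient space. The expected obstacle, if any, is purely in matching conventions: one must make sure the time-parametrized feature map used to define $\fmn$ is exactly the one to which Theorem~\ref{thm:univ_char} applies, so that injectivity transfers verbatim.
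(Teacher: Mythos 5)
Your proposal is correct and follows the same route as the paper, which simply notes that the argument is analogous to the graph-feature case: the forward direction is immediate, and the converse applies the characteristicness clause of Theorem~\ref{thm:univ_char} to the finitely (hence compactly) supported conditional laws. Your added care about placing both supports in a common compact set and about the time-parametrization convention is sound bookkeeping that the paper leaves implicit.
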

\begin{proof}
    The proof is analogous to the proof of~\Cref{thm:characterizing_rw_informal} given above. 
\end{proof}



\section{Details on the Low Rank Algorithm.}
\label{apx:low_rank_functionals}

In this section, we will provide further details and proofs on the low-rank approximation method discussed in~\Cref{sec:algos}. We can use low-rank tensors to define corresponding low-rank functionals of the features obtained via hypo-elliptic diffusion. The following is the definition of \emph{CP-rank} of a tensor from~\cite{carroll_analysis_1970}.

\begin{definition}
    The \emph{rank} of a level $m$ tensor $\bv_m \in (\vs)^{\otimes m}$ is the smallest number $r \geq 0$ such that we can express $\bv_m$ as
    \[
        \bv_m = \sum_{i=1}^r v_{i,1} \otimes \ldots \otimes v_{i,m}, \quad v_{i,j} \in \vs.
    \]
    We say that $\bv = (\bv_0, \bv_1, \ldots) \in H$ is \emph{rank $1$} if all $\bv_m$ are rank $1$ tensors. 
\end{definition}

We will now prove~\Cref{thm:algo}, which is stated using the node feature map without \texttt{ZeroStart} (see~\Cref{apx:variations}). In particular, the node feature map is given by
\begin{align}
\label{eq:fmn_no_zerostart}
    \fmnz_k(i) = \E[\fm(\delta_1 \bL_k) \cdots \fm(\delta_k\bL_k) \given B_0 = i] = (\wP^k \one_H)^{(i)} \in H,
\end{align}
where we explicitly specify the walk length $k$ in the subscript. Furthermore, if we also need to specify the tensor degree $m$, we will use two subscripts, where
\[
    \fmnz_{k,m}(i) \in (\vs)^{\otimes m}
\]
is the level $m$ component of the hypo-elliptic diffusion with a walk length of $k$. Throughout the proof, we omit the $H$ subscript for the all-ones vector, such that $\one^T \coloneqq (1_H, \ldots, 1_H) \in H^n$, and we denote $\one^T_i = (0, \ldots, 1_H, \ldots, 0) \in H^n$ to be the unit vector in the $i^{\text{th}}$ coordinate. 

\begin{proof}[Proof of~\Cref{thm:algo}]

First, we will show that $f_{1,m}(i) = \langle \bell_m, \fmnz_1(i)\rangle$ for all $m = 1, \ldots, M$. By the definition of hypo-elliptic diffusion, we know that
\[
    \fmnz_1(i) = \one_i^T \wP \one = \sum_{j=1} \wP_{i,j} = \sum_{i \sim j} \frac{\exp_{\otimes}(\nf(j) - \nf(i))}{d_i}.
\]
By explicitly expressing the level $m$ component, and by factoring out the inner product, we get
\begin{align}
    \left\langle \bell_m, \frac{\exp_{\otimes}(\nf(j) - \nf(i))}{d_i} \right\rangle &= \frac{1}{d_i M!} \prod_{r=m}^M \langle u_m, (\nf(j) - \nf(i))\rangle \\
    & = \frac{1}{M!} (P_{i,j} \cdot C^{u_{M-m}}_{i,j} \cdot \ldots \cdot C^{u_M}_{i,j}).
\end{align}
Then by linearity of the inner product, we get $f_{1,m}(i) = \langle \bell_m, \fmnz_1(i)\rangle$.

Next, we continue by induction and suppose that $f_{k-1,m}(i) = \langle \bell_m, \fmnz_{k-1}(i)\rangle$ holds for all $m = 1, \ldots, M$. Starting once again from the definition of hypo-elliptic diffusion, we know that 
\[
    \fmnz_k(i) = \one_i^T\wP\fmnz_{k-1} = \sum_{i \sim j} \wP_{i,j} \cdot \fmnz_{k-1}(j).
\]
Fix a degree $m$. We explicitly write out the level $m$ component of this equation by expanding $\wP_{i,j}$ and the tensor product as
\begin{align}
    \fmnz_{k,m}(i) &= \sum_{i \sim j} \sum_{r=0}^m \frac{(\nf(j) - \nf(i))^{\otimes r}}{d_i r!} \cdot  \fmnz_{k-1, m- r}(j) \nonumber\\
    & = \sum_{i \sim j} \frac{\fmnz_{k-1, m}(j)}{d_i} + \sum_{r=1}^m \frac{1}{r!} \sum_{i \sim j} \frac{(\nf(j) - \nf(i))^{\otimes r}}{d_i} \cdot  \fmnz_{k-1, m- r}(j) \label{eq:Phi_k_m_expanded}
\end{align}
Note that the first sum is equivalent to
\[
    \sum_{i \sim j} \frac{\fmnz_{k-1, m}(j)}{d_i} = \one_i^T P \cdot \fmnz_{k-1,m}.
\]

Applying the linear functional $\bell_m$ and the induction hypothesis to this, we have
\[
    \left\langle \bell_m, \sum_{i \sim j} \frac{\fmnz_{k-1, m}(j)}{d_i} \right\rangle = \one_i^T P \cdot f_{k-1,m}.
\]
For the second sum in Equation~\eqref{eq:Phi_k_m_expanded}, we can factor the inner product and apply the induction hypothesis to get
\begin{align}
    \Bigg\langle \bell_m , \sum_{i \sim j} \frac{(\nf(j) - \nf(i))^{\otimes r}}{d_i} \cdot  \fmnz_{k-1, m- r}(j) \Bigg\rangle  = \sum_{j=1}^n P_{i,j} \cdot C^{u_{M-m+1}}_{i,j} \cdot \ldots \cdot C^{M-m+r}_{i,j} \cdot f_{k-1, m-r}.
\end{align}
Putting this all together, we get
\begin{align}
    \fmnz_{k,m}(i) = \one_i^T \left( P \cdot f_{k-1, m} + \sum_{r=1}^m \frac{1}{r!} (P \odot C^{u_{M-m+1}} \odot \ldots \odot C^{u_M}) \cdot f_{k-1, m-r}\right) = f_{k,m}(i).
\end{align}
\end{proof}

\paragraph{Computing the ZeroStart Variation.}
We can adapt the recursive algorithm provided by~\Cref{thm:algo} above in order to compute low rank approximations to the ZeroStart variation of the node features, which is used throughout the main text (see also~\Cref{apx:variations}). In particular, we consider
\begin{align}
\label{eq:fmn_with_zerostart}
    \fmn_k(i) = \E[\fm(\delta_0\bL_k) \cdots \fm(\delta_k \bL_k) \given B_0 = i] = \fm(\delta_0\bL_k) \cdot \fmnz_k(i),
\end{align}
where $\fmnz_k$ is the variation without ZeroStart defined in Equation~\eqref{eq:fmn_no_zerostart}. Note that we can factor the $\fm(\delta_0\bL_k)$ term out of the expectation due to the conditioning $B_0 = i$, and thus $\fm(\delta_0\bL_k) = \fm(f(i))$ is fixed. Thus, we can compute low rank functionals of $\fmn_k(i)$ using one additional step.

\begin{theorem}
\label{thm:algo_zerostart}
    Using the same hypotheses as~\Cref{thm:algo}; let
    \[
        \bell_m = u_{M-m+1} \otimes \ldots \otimes u_M,
    \]
    where $u_m \in \vs$ for $m = 1, \ldots, M$ and let
    \[
        F^u_i \coloneqq \langle u, f(i)\rangle.
    \]
    Then,
    \begin{align}
        \langle \bell_M, \Phi_k(i)\rangle = \sum_{r=0}^M \frac{1}{m!} F^{u_1}_i \cdots F^{u_r}_i \cdot f_{k,M-r}(i),
    \end{align}
    where $f_{k,m}(i) = \langle \bell_m, \fmnz_k(i)\rangle$ from~\Cref{thm:algo}.
\end{theorem}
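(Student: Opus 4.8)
The plan is to start from the factorization already recorded in Equation~\eqref{eq:fmn_with_zerostart}, namely $\fmn_k(i) = \fm(f(i)) \cdot \fmnz_k(i)$, which holds because conditioning on $B_0 = i$ fixes $\delta_0 \bL_k = L_0 = f(i)$, so that $\fm(\delta_0\bL_k)$ factors out of the conditional expectation as a constant prefactor. With $\fm = \exp_\otimes$ this reads $\fmn_k(i) = \exp_\otimes(f(i)) \cdot \fmnz_k(i)$, and expanding the product in $H$ via Equation~\eqref{eq:algebra_multiplication} together with $\exp_\otimes(f(i))_r = f(i)^{\otimes r}/r!$ gives the degree-$M$ component
\[
\fmn_{k,M}(i) = \sum_{r=0}^M \frac{f(i)^{\otimes r}}{r!} \otimes \fmnz_{k,M-r}(i).
\]

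Next I would pair this with the degree-$M$ rank-$1$ functional $\bell_M = u_1 \otimes \cdots \otimes u_M$ and use that the tensor inner product~\eqref{eq:tensor_inner_product} is multiplicative across tensor factors of matching degree: for each $r$,
\[
\inner{u_1\otimes\cdots\otimes u_M,\ \tfrac{1}{r!}f(i)^{\otimes r}\otimes \fmnz_{k,M-r}(i)} = \frac{1}{r!}\Bigl(\prod_{j=1}^r \inner{u_j, f(i)}\Bigr)\cdot \inner{u_{r+1}\otimes\cdots\otimes u_M,\ \fmnz_{k,M-r}(i)}.
\]
The first factor is exactly $\tfrac{1}{r!}F^{u_1}_i\cdots F^{u_r}_i$ by definition of $F^u_i$. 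For the second factor, the point is that $u_{r+1}\otimes\cdots\otimes u_M$ is precisely the degree-$(M-r)$ component $\bell_{M-r}$ of the rank-$1$ functional~\eqref{eq:low rank functional} (that component is $u_{M-(M-r)+1}\otimes\cdots\otimes u_M = u_{r+1}\otimes\cdots\otimes u_M$), so by Theorem~\ref{thm:algo} the second factor equals $f_{k,M-r}(i)$. Summing over $r = 0, \ldots, M$ yields the claimed identity, which also lets one read off the complexity as one extra $O(M^2 d)$ pass over~\Cref{thm:algo}.

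The computation is essentially routine; the only step that genuinely needs care — and the one I would write out most explicitly — is matching the index ranges of the $u_j$ under the splitting of the inner product, so that the tail $u_{r+1}\otimes\cdots\otimes u_M$ is recognized as $\bell_{M-r}$ with exactly the indexing convention of~\eqref{eq:low rank functional} used in Theorem~\ref{thm:algo}, rather than a reindexed copy. (I would also flag the harmless typo that the $\tfrac{1}{m!}$ in the statement should read $\tfrac{1}{r!}$.) Everything else — the factorization of the expectation, the Cauchy-product expansion of $\exp_\otimes(f(i))\cdot\fmnz_k(i)$, and multiplicativity of the tensor inner product — is immediate from the definitions in~\Cref{sec:sequence features} and~\Cref{apx:tensors}.
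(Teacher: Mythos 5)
Your proposal is correct and follows essentially the same route as the paper's own proof: expand $\fm(f(i))\cdot\fmnz_k(i)$ at degree $M$ via the Cauchy product and then distribute the rank-$1$ functional, using multiplicativity of the tensor inner product to split off $F^{u_1}_i\cdots F^{u_r}_i$ and recognize the tail as $f_{k,M-r}(i)$; you simply make explicit the index bookkeeping that the paper leaves implicit. Your observation that the $\tfrac{1}{m!}$ in the statement should read $\tfrac{1}{r!}$ is also consistent with the paper's own concluding display.
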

\begin{proof}
    Using the definition of $\Phi_k(i)$ from Equation~\eqref{eq:fmn_with_zerostart}, and expanding out the definition of $\fm(\delta_0\bL_k)$ at level $M$, we have
    \begin{align}
        \fmn_{k,M}(i) &= \sum_{r=0}^M \frac{f(i)^{\otimes r}}{r!} \cdot \fmnz_{k,M-r}(i).
    \end{align}
    Then, taking the linear functional and distributing, we obtain the result
    \[
        \langle \bell_M, \Phi_k(i)\rangle = \sum_{r=0}^M \frac{1}{r!} F^{u_1}_i \cdots F^{u_r}_i \cdot f_{k,M-r}(i).
    \]
\end{proof}

\paragraph{Computational Complexity.}
We will now consider the computational complexity of our algorithms. We begin by noting that the naive approach of computing $\Phi_k(i) = (F\wP^k\one_H)^{(i)}$ has the computational complexity of matrix multiplication; though this counts tensor operations, which itself requires $O(d^m)$ scalar multiplications at tensor degree $m$. This is computationally too expensive for practical applications.

Next, we consider the complexity of the recursive low-rank algorithm from~\Cref{thm:algo}, where the primary computational advantage is the fact that we only perform \emph{scalar} operations rather than \emph{tensor} operations. We consider the recursive step from Equation~\eqref{eq:recursive_low_rank}, reproduced here for $m = M$,
\[
   f_{k,M} \coloneqq P \cdot f_{k-1,M} + \sum_{r=1}^M \frac{1}{r!} \left(P \odot C^{u_{1}} \odot \cdots \odot C^{u_{r}}\right) \cdot f_{k-1,M-r}.
\]
Given a graph $\cG = (\cV, \cE, f)$ with $n = |\cV|$ nodes and $E = |\cE|$ edges, both $P$ and $C^u$ are sparse $n \times n$ (scalar) matrices with $O(E)$ nonzero entries, and $f_{k,m}$ is an $n$-dimensional column vector. Recall that $\odot$ denotes element-wise multiplication, and thus both the sparse matrix-matrix multiplication and the sparse matrix-vector multiplication have complexity $O(E)$. Furthermore, the entry-wise products $C^{u_1} \odot \cdots \odot C^{u_r}$ differ by only one factor between $r=m$ and $r=m+1$, and thus, computing $f_{k,M}$ assuming all lower $f_{k', m'}$ have been computed has complexity $O(M E)$. Taking into account the two recursive parameters results in a complexity of $O(kM^2 E)$. Note that this is the complexity to compute features for \emph{all nodes}.

Once the $f_{k,m}$ are computed, the complexity of adding the start point from~\Cref{thm:algo_zerostart} is $O(M)$.

\section{Variations and Hyperparameters of Hypo-Elliptic Diffusion}
\label{apx:variations}

In this appendix, we summarize possible variations of the sequence feature map, leading to different hypo-elliptic diffusion features. The choice of variation is learned during training, and we also summarize the hyperparameters used for our features. While the theoretical results on characterizing random walks, such as~\Cref{thm:characterizing_rw_informal}, depend on specific choices of the sequence feature map, there exist analogous results for these variations, which can characterize random walks up to certain equivalences. Furthermore, the computation of these variations can be performed in the same way: through tensorized linear algebra for exact solutions, and through an analogous low-rank method (as in~\Cref{thm:algo}) for approximate solutions. 

We fix an algebra lifting $\fm: \vs \rightarrow H$ and let $\bx = (x_0, \ldots, x_k) \in \Seq(\vs)$. The simplest sequence feature map to define simply multiplies the terms in the sequence together as
\[ \label{eq:nodiff}
    \fms(\bx) = \fm(x_0) \cdots \fm(x_k).
\]
Note that this is \emph{not} the sequence feature map used in the main text. We will now discuss several variations of this map, where $\fms_{\inc, \zs}$ is the one primarily used in the main text and $\fms_{\inc, \zs, \tp}$ is used to characterize random walks in~\Cref{thm:characterizing_rw_informal} and~\Cref{apx:characterizing_rw}.

\begin{description}
    \item [Increments ($\texttt{Diff}$).] Rather than directly multiplying terms in the sequence together, we can instead multiply the \emph{increments} as
    \[
    \fms_\inc(\bx) = \fm(\delta_1 \bx) \cdots \fm(\delta_k\bx),
    \]
    where $\delta_i \bx \coloneqq x_i - x_{i-1}$ for $i \geq 1$. In both cases, the sequence feature map is the path signature of a continuous piecewise-linear path when we set $\fm = \exp_\otimes$, as discussed in~\Cref{apx:sequence features}, and it is instructive to use this perspective to understand the effect of increments. If we use increments, the path corresponding to the sequence is
    \[
    \bX_\inc(t) \coloneqq x_{i} + (t-i)(x_{i+1}-x_i) \text{ for }t \in \left[{i},{i+1}\right),
    \]
    while if we do not use increments, the path corresponding to the sequence is
    \[
    \bX(t) \coloneqq \sum_{j=0}^{i - 1} x_j + (t-i)x_i \text{ for }t \in \left[{i},{i+1}\right).
    \]
    Thus, when we use increments the sequence $\bx$ corresponds to the vertices of the path $\bX_\inc$, while if we do not, it corresponds to the vectors between vertices of the path $\bX$. In practice, this variation corresponds to taking first-differences of the sequence $\bx$ before using eq.~\eqref{eq:nodiff}. 
    
    \item [Zero starting point ($\texttt{ZeroStart}$).] The sequence feature map with increments, $\fms_\inc$, as defined above is \emph{translation-invariant}, meaning $\fms_\inc(\bx+a) = \fms_\inc(\bx)$, where $\bx+a = (x_0 +a, x_1 +a, \ldots, x_k +a)$ for some $a \in \vs$. In order to remove translation invariance, we can start each sequence at the origin $0 \in \vs$ by pre-appending a $0$ to each sequence. A concise way to define the resulting \emph{zero started} sequence feature map is
    \[
    \fms_{\inc, \zs}(\bx) = \fm(\delta_0 \bx) \cdots \fm(\delta_k\bx),
    \]
    where we define $\delta_0 \bx \coloneqq x_0$. This is the sequence feature map defined in Equation~\eqref{eq:sequence_feature_map}. Note that this variation does not change the sequence feature map if we do not use increments.
    
    \item [Time parametrization.] When we relate sequences to piecewise linear paths as described in~\Cref{apx:sequence features}, we can use the fact that the path signature is invariant under reparametrization, or more generally, tree-like equivalence~\cite{hambly_uniqueness_2010}. In terms of discrete sequences, this includes invariance with respect to $0$ elements in the sequence (without increments), and repeated elements in the sequence (with increments). In order to remove this invariance, we can include \emph{time parametrization} by setting
    \[
        \fms_{-, \tp}(\bx) \coloneqq \fms_{-}(\bar{\bx}),
    \]
    where $\bar{\bx} \coloneqq (\bar x_0, \ldots, \bar x_k) \in \Seq(\R^{d+1})$, with $\bar x_i \coloneqq (i, x_i) \in \R^{d+1}$. This is a simple form of positional encoding, but other encodings are also possible, e.g.~sinusoidal waves as in \cite{vaswani2017attention}.
    
    \item [Algebra lifting ($\texttt{AlgOpt}$).] Throughout this article, we have used the tensor exponential as the algebra lifting. However, we can also scale each level of the lifting independently, and keep these as hyperparameters to optimize. In particular, for a sequence $\bc = (c_0, c_1, \ldots) \in \R^\N$, we define $\fm^\bc: \vs \rightarrow H$ to be
    \[
        \fm^\bc(x) \coloneqq \left(c_m x^{\otimes m}\right)_{m=0}^\infty,
    \]
    where $1 / m!$ is used as initialization for $c_m$ and learned along with the other parameters.
\end{description}

The choice of which variant of the sequence feature map to use depends on which invariance properties are important for the specific problem. In practice, the choice can be learned during the training, which is done in our experiments in~\Cref{sec:experiments}. Furthermore, the features obtained through the low-rank hypo-elliptic diffusion depend on three hyperparameters:
\begin{itemize}
    \item the length of random walks;
    \item the number of low-rank functionals;
    \item the maximal tensor degree;
    \item the number of iterations (layers).
\end{itemize}
Note that the first three hyperparameters can also potentially vary across different iterations. 

\section{Experiments} \label{apx:exp}

We have implemented the low-rank algorithm for our layers given in Theorem \ref{thm:algo} using Tensorflow, Keras and Spektral \cite{grattarola2021graph}. Code is available at \url{https://github.com/tgcsaba/graph2tens}. All experiments were run on one of 3 computing clusters that were overall equipped with 5 NVIDIA Geforce 2080Ti, 2 Quadro GP100, 2 A100 GPUs. The largest GPU memory allocation any one of the experiments required was around ${\sim}5$GB. For the experiments ran using different random seeds, the seed was used to control the randomness in the \begin{enumerate*}[label=(\arabic*)] \item data splitting process, \item parameter initialization, \item optimization procedure. \end{enumerate*} For an experiment with overall $n_{\text{runs}}$ number of runs, the used seeds were $\{0, \dots, n_{\text{runs}}-1\}$.
\subsection{Model details.} \label{apx:model_detail}
\begin{figure}[t]
    \centering
    \includegraphics[width=\textwidth]{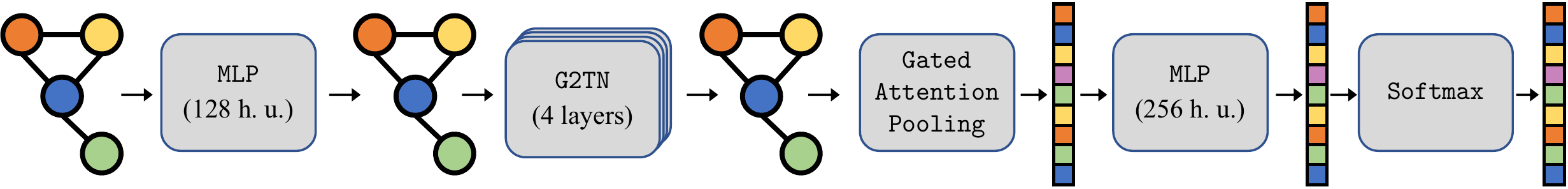}
    \caption{Visualization of the architecture used for NCI1 and NCI109 described in Section \ref{sec:experiments}.}
    \label{fig:architecture}
\end{figure}

The architecture described in Section \ref{sec:experiments} is visualized conceptually on Figure \ref{fig:architecture}, where for \GTAN the \GTN stack is replaced by the attentional variation without changing any hyperparameters. Further to the discussion given in the main text, here we provide more details on the model implementation.

\paragraph{Initialization.} In our \GTN and \GTAN layers, we linearly project tensor-valued features using linear functionals that use the same low-rank parametrization as the \emph{recursive} variation in \cite[App.~D.2]{toth_seq2tens_2020}. There, the authors also propose in Appendix D.5 an initialization heuristic for setting the variances of these component vectors, that we have employed with a centered uniform distribution.

\paragraph{Regularization.} We also apply $\ell_2$ regularization for both experiments in Sections \ref{sec:experiments} and \ref{apx:further_exp} for which we discuss the implementation here. Given a max.~tensor degree $M \geq 1$ and a max.~tensor rank $R \geq 1$ representing the layer width, there are overall $RM$ rank-$1$ linear functionals of the form
\[ \label{eq:lin_func}
    \bell^r_m = u^r_{M-m+1} \otimes \cdots \otimes u^r_{M} \quad \text{for } m=1,\dots M \text{ and } r=1, \dots, R.
\]
Since in practice these are represented using only the component vectors $u^r_{i} \in \R^d$, a naive application of $\ell_2$ regularization would lead to computing the $2$-norm of each component vector $u^r_i$ to penalize the loss function. However, we found this approach to underperform compared to the following idea. Although our algorithm represents the functionals using a rank-$1$ decomposition and computes the projection of each tensor-valued node feature without explicitly building high-dimensional tensors, conceptually we still have tensors ($\bell_m^r$) acting on tensors ($\fmn(i)$ for $i \in \cV$), and hence the tensor norm, $\norm{\bell_m^r}_2$, should be used for regularization. Fortunately, this can also be computed efficiently:
\[
    \norm{\bell_m^r}_2 = \norm{u_{M-m+1}^r \otimes \cdots \otimes u_M^r}_2 = \norm{u_{M-m+1}^r}_2 \cdots \norm{u_M^r}_2.
\]
Further, as is common, we replace the $\ell_2$ norm with its squared value, sum over all functionals in \eqref{eq:lin_func}, and multiply by the regularization parameter $\lambda > 0$ so that the final penalty is given by
\[
    \texttt{L2Penalty} = \lambda \sum_{r=1}^R \sum_{m=1}^M \norm{u_{M-m+1}^r}_2^2 \cdots \norm{u_M^r}_2^2.
\]

\subsection{Experiment Details.} \label{apx:exp_detail}
Here we provide further details on the main experiment described in Section \ref{sec:experiments}.
\paragraph{Hyperparameter Selection.}
 The model architecture was primarily motivated by GraphTrans (small) from \cite{wu2021representing}. Specifically, the number and width of GNN layers, and the dropout rate was adopted as is. The $\ell_2$ regularization strength was chosen equal to the weight decay rate. The other hyperparameters were tuned on a single split of NCI1. For the random walk length, we experimented with values $2, 5, 10$. For the given GNN depth ($4$), $2$ RW steps per layer was not enough to learn long-range interactions, while $10$ significantly slowed down the convergence rate during training. For the maximal degree of tensors, we experimented with values from $2,3,4$, and using values beyond $2$ did not provide improvements. Intuitively, the tensor degree represents the order of nonlinear interactions that are learnable by the layer, e.g.~a degree of $2$ encodes pairwise interactions between node features in the neighbourhood, while a higher degree of $M$ allows to encode interactions between certain $M$-tuples of nodes. We suggest that a degree of $2$ is a good baseline setting, and that increasing the GNN depth instead allows to efficiently capture higher order interactions, while increasing the effective influence radius at the same time. For the pre- and postprocessing layers, we experimented with various depths and choosing more than $1$ layer for each was counterproductive. The number of units was simply set to the GNN width ($128$) in the preprocessing layer, while for the postprocessing layer slightly increasing it was found to provide improvements ($256$), potentially to compensate for the large amount of information that is compressed in the pooling step.
 
 \begin{table}[t]
    \caption{Accuracies computed over 5 seeds of \GTN ablated by changing a single option.}
    \label{table:ablation_g2tn}
    \begin{adjustbox}{center}
      \centering
      \begin{small}
      \begin{tabular}{lccccccc}
        \toprule
        \textbf{Dataset} & \texttt{NoDiff} & \texttt{NoZeroStart} & \texttt{NoAlgOpt} & \texttt{NoJK} & \texttt{NoSkip} & \texttt{NoNorm} & \texttt{AvgPool} \\
        \midrule
        NCI1 & $79.1 \pm 1.5$ &  $80.7 \pm 0.6$ & $78.9 \pm 1.3$ & $79.4 \pm 1.9$ & $81.0 \pm 1.3$ & $79.5 \pm 1.1$ & $80.3 \pm 1.7$ \\
        NCI109 & $77.8 \pm 1.2$ & $79.5 \pm 1.5$ & $76.5 \pm 1.7$ & $78.1 \pm 1.9$ & $77.0 \pm 1.6$ & $77.4 \pm 2.7$ & $77.6 \pm 1.8$ \\
        \bottomrule
      \end{tabular}
    \end{small}
    \end{adjustbox}
\end{table}

 \paragraph{Further Ablation Studies.}
 Here we extend the investigation of our model architectures that was partially given in Section \ref{sec:experiments} in Table \ref{table:ablation_g2tan}. We give the analogous ablation study for the \GTN model in Table \ref{table:ablation_g2tn}, and compare the derived conclusions between the attentional and attention-free versions. First, we discuss the layer variations. Similarly to \GTAN, \texttt{NoDiff} slightly decreases the accuracy. However the conclusions regarding \texttt{NoZeroStart} and \texttt{NoAlgOpt} are different. In this case, removing \texttt{ZeroStart} actually improves the performance, while on \GTAN the opposite was true. An interpretation of this phenomenon is that only the attention mapping that is used to learn the random walks required information about translations of the layer's features, and not the tensor-features themselves. Another difference is that \texttt{NoAlgOpt} degrades the accuracy more significantly for \GTN. A possible explanation is that since \GTAN layers are more flexible thanks to their use of attention, they rely less on being able to learn the algebraic lift, while as \GTN layers are more rigid in their random walk definition, and benefit more from the added flexibility of \texttt{AlgOpt}. Additionally, it seems that \GTN is more sensitive to the various architectural options, and removing any of them, i.e.~\texttt{NoJK}, \texttt{NoSkip}, \texttt{NoNorm} or \texttt{AvgPool}, degrades the accuracy by $1\%$ or more on at least one dataset. Intuitively, it seems that overall the \GTAN model is more robust to the various architectural ``tricks'', and more adaptable due to its ability to learn the the random walk.

\subsection{Further Experiments} \label{apx:further_exp}
\paragraph{Citation datasets.}
Additionally to transductive learning on the biological datasets, we have carried out inductive learning tasks on some of the common citation datasets, i.e.~Cora, Citeseer \cite{sen2008collective} and Pubmed \cite{namata2012query}. We follow \cite{shchur2018pitfalls} in carrying out the experiment, and use the largest connected component for each dataset with $20$ training examples per class, $30$ validation examples per class, and the rest of the data used for testing.  The hyperparameters of our models and optimization procedure were based on the settings of the GAT model in \cite[Table 4]{shchur2018pitfalls}, which we have slightly fine-tuned on Cora and used for the other datasets. In particular, a single layer of \GTN or \GTAN is used with $64$ functionals, max.~tensor degree $2$ and random walk length $5$. The dropout rate was tuned to $0.9$, while the attentional dropout was set to $0.3$ in \GTAN. Optimization is carried out with Adam \cite{kingma2015adam}, a fixed learning rate of $0.01$ and $\ell_2$ regularization strength $0.01$. Training is stopped once the validation loss does not improve for $50$ epochs, and restored to the best checkpoint. For both of our models, \texttt{NoDiff} is used that we found to improve on the results as opposed to using increments of node features. The dropout rate had to be tuned as high as $0.9$, which suggests very strong overfitting, hence the additional complexity of \texttt{AlgOpt} was also contrabeneficial, and \texttt{NoAlgOpt} was used. As such, each model employs a single hidden layer, which is followed by layer normalization that was found to perform slightly better than other normalizations, e.g.~graph-level normalization.
\begin{table}[t]
\caption{Accuracies of our models on the citation datasets computed over 100 seeds compared with the 4 consistently best performing baselines from \cite{shchur2018pitfalls}.}
\label{table:cit_results}
\centering
\begin{adjustbox}{center}
\begin{small}
    \begin{tabular}{lcccccc}
    \toprule
    \textbf{Dataset} & \textbf{GCN} & \textbf{GAT} & \textbf{MoNet} & \textbf{GraphSage (mean)} & \textbf{\GTN (ours)} & \textbf{\GTAN (ours)} \\
    \midrule
    Cora & $81.5 \pm 1.3$ & $81.8 \pm 1.3$ & $81.3 \pm 1.3$ & $79.2 \pm 7.7$ & $\mathbf{82.6 \pm 1.0}$ & $82.0 \pm 1.1$ \\
    Citeseer & $\mathbf{71.9 \pm 1.9}$ & $71.1 \pm 1.9$ & $71.2 \pm 2.0$ & $71.6 \pm 1.9$ & $69.4 \pm 1.0$ & $68.2 \pm 1.3$ \\
    Pubmed & $77.8 \pm 2.9$ & $78.7 \pm 2.3$ & $78.6 \pm 2.3$ & $77.4 \pm 2.2$ & $\mathbf{78.8 \pm 1.9}$ & $78.0 \pm 1.9$ \\
    \bottomrule
    \end{tabular}
    \end{small}
\end{adjustbox}
\end{table}

\begin{table}[t]
    \caption{Accuracies of our models on $k$-hop sanitized citation datasets computed over 100 seeds compared with the 5 consistently best performing models from \cite{rampavsek2021hierarchical}.}
    \label{table:results_cit_khop}
    \centering
    \begin{adjustbox}{center}
    \begin{small}
        \begin{tabular}{llccccccc}
        \toprule
        \textbf{k} & \textbf{Dataset} & \textbf{GCN} & \textbf{GAT} & \textbf{g-U-net} & \textbf{HGNet-EP} & \textbf{HGNet-L} & \textbf{\GTN (ours)} & \textbf{\GTAN (ours)} \\
        \midrule 
        \multirow{3}{*}{$1$} & Cora & $76.7$ & $78.5$ & $78.1$ & $77.2$ & $77.1$ & $\mathbf{81.8 \pm 1.3}$ & $80.9 \pm 1.4$ \\
        & Citeseer & $64.2$ & $66.4$ & $63.0$ & $64.3$ & $64.10$ & $\mathbf{68.1 \pm 1.3}$ & $66.6 \pm 1.4$ \\
        & Pubmed & $75.8$ & $75.9$ & $75.8$ & $77.0$ & $76.3$ & $\mathbf{78.7 \pm 1.9}$ & $77.5 \pm 2.0$ \\
        \middashrule
        \multirow{3}{*}{$2$} & Cora & $72.0$ & $73.4$ & $74.4$ & $74.0$ & $75.4$ & $\mathbf{79.4 \pm 2.8}$ & $78.3 \pm 2.9$ \\
        & Citeseer & $58.3$ & $59.4$ & $57.3$ & $57.8$ & $59.9$ & $\mathbf{63.6 \pm 1.8}$ & $62.2 \pm 1.9$ \\
        & Pubmed & $72.1$ & $73.1$ & $72.4$ & $72.9$ & $75.1$ & $\mathbf{77.1 \pm 1.7}$ & $76.3 \pm 1.6$ \\
        \bottomrule
        \end{tabular}
    \end{small}
    \end{adjustbox}
    \end{table}

The results of our models over 100 runs are reported in Table \ref{table:cit_results} compared with the 4 consistently best performing baselines on these datasets from \cite{shchur2018pitfalls}, i.e.~GCN \cite{kipf_semi-supervised_2017}, GAT \cite{velickovic_graph_2018}, MoNet \cite{monti2017geometric}, and GraphSage \cite{hamilton2017inductive} with a mean aggregator. Firstly on Cora, both \GTN and \GTAN outperform the baselines with a more significant improvement for \GTN. For CiteSeer, our models are left somewhat behind compared to the baselines in terms of accuracy. Finally, they are again competitive on Pubmed, where \GTN takes the top score with a very slight lead. Two consistent observations are: \begin{enumerate*}[label=(\arabic*)] \item \GTN and \GTAN have a lower variance than all baselines, \item \GTN consistently outperforms \GTAN. \end{enumerate*} The latter may be attributed to the observation that due to the severe overfitting on these datasets, the additional complexity of the attention mechanism in \GTAN is unhelpful for generalization.

\paragraph{$K$-hop Sanitized Splits.} Recent work \cite{rampavsek2021hierarchical} has demonstrated that it is possible to make the previously considered citation datasets more suitable for testing the ability of a model to learn long-range information by dropping node labels in a structured way. Concretely, they use a label resampling strategy to guarantee that if a node is selected for a data split, none of its $k$-th degree neighbours are included in any splits, i.e.~training, validation nor testing, allowing to reduce the effect of short-range ``label imprinting''. In practice, we select a maximal independent set from the graph with respect to the $k$-th power of the adjacency matrix with self-loops, and repeat the previous experiment with the same data splitting method, model choice and training settings as before. In this case, the experiment seed is also used to control the random maximal independent set that is selected.  

The results of our models trained on the citation datasets sanitized this way are available in Table \ref{table:results_cit_khop} computed over 100 seeds for $k=1,2$. As baseline results, we compare against the 5 best performing models from \cite{rampavsek2021hierarchical}, where various GNN depths were also considered for each model, and we use the \emph{best} reported result for each of the baselines. Overall, we can observe that all baseline models exhibit a very sharp drop in performance as $k$ is increased, while for \GTN and \GTAN, the performance decrease is not nearly as pronounced. For both $k=1$ and $k=2$, our models perform better than the baselines on all datasets. This is explained by the fact that due to using random walks length of $5$, the models can efficiently pick up on information outside of the sanitized neighbourhoods. As previously, \GTN performs better than \GTAN as the additional flexibility of the attention layer does not lead to improved generalization performance when severe overfitting is present. This experiment demonstrates that the proposed models efficiently pick up on long-range information within larger neighbourhoods, and it suggests that on inductive learning tasks they should be more robust to sparse labeling rates compared to common short-range GNN models. 

\end{document}